\tiny\color{gray},
\theoremstyle{plain}
\newtheorem{theorem}{Theorem}[section]
\newtheorem{corollary}[theorem]{Corollary}
\theoremstyle{definition}
\newtheorem{definition}[theorem]{Definition}
\theoremstyle{remark}
\icmltitlerunning{Improving Open-Ended Text Generation via Adaptive Decoding}
\begin{document}

\twocolumn[
\icmltitle{Improving Open-Ended Text Generation via Adaptive Decoding}



\icmlsetsymbol{equal}{*}

\begin{icmlauthorlist}
\icmlauthor{Wenhong Zhu}{yyy}
\icmlauthor{Hongkun Hao}{yyy}
\icmlauthor{Zhiwei He}{yyy}
\icmlauthor{Yiming Ai}{yyy}
\icmlauthor{Rui Wang}{yyy}
\end{icmlauthorlist}

\icmlaffiliation{yyy}{MT Lab, Department of Computer Science and Engineering
, Shanghai Jiao Tong University, Shanghai, China}

\icmlcorrespondingauthor{Rui Wang}{wangrui12@sjtu.edu.cn}

\icmlkeywords{Machine Learning, ICML}

\vskip 0.3in
]



\printAffiliationsAndNotice{}  

\begin{abstract}
Current language models decode text token by token according to probabilistic distribution, and determining the appropriate candidates for the next token is crucial to ensure generation quality. This study introduces \textit{adaptive decoding}, a mechanism that dynamically empowers language models to ascertain a sensible candidate set during generation.  Specifically, we introduce an entropy-based metric called \textit{confidence} and conceptualize determining the optimal candidate set as a \textit{confidence-increasing} process. The rationality of including a token in the candidate set is assessed by leveraging the increment of confidence. Experimental results reveal that our method balances diversity and coherence well. The human evaluation shows that our method can generate human-preferred text. Additionally, our method can potentially improve the reasoning ability of language models. 
\end{abstract}
\section{Introduction}
When it comes to open-ended text generation tasks, the aim is to produce coherent and fluent output based on limited input information \cite{holtzman2019curious}. These tasks span various applications, including casual conversation \cite{thoppilan2022lamda}, storytelling \cite{mostafazadeh-etal-2016-corpus}, and similar domains. Language models (LMs) presently estimate the likelihood of the next token in text generation, and they have gained extensive application in various tasks \cite{brown2020language, touvron2023llama}. One factor that determines the quality of generation is the decoding algorithm.

\begin{figure*}
  \centering
  \subfigure[The probability of human-preferred token and LM-preferred token.]{
    \includegraphics[width=5cm]{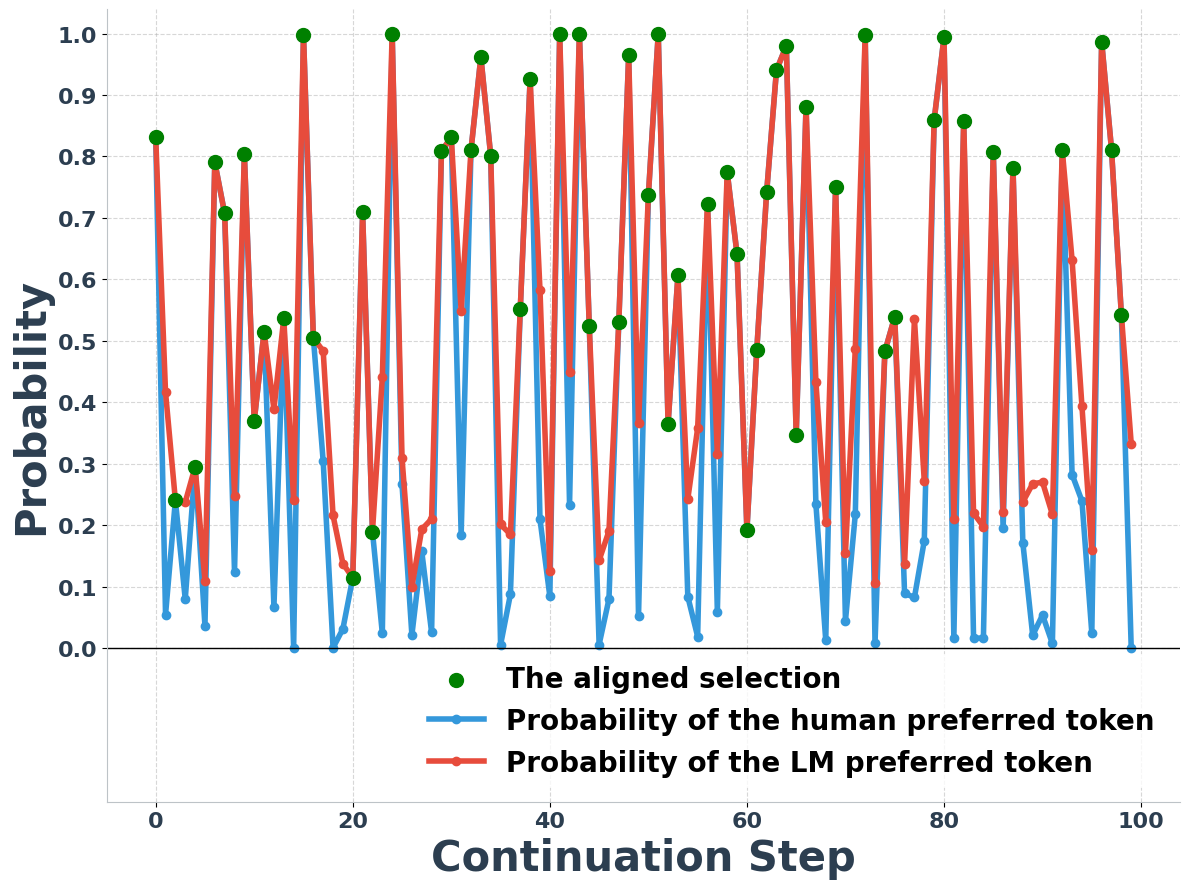}
    \label{fig:generation_1}
  }
  \subfigure[Head distribution given the \textbf{prefix}: \textit{``Growing up, Obama faced the \underline{\textcolor{red}{challenges}}''}]{
    \includegraphics[width=5cm]{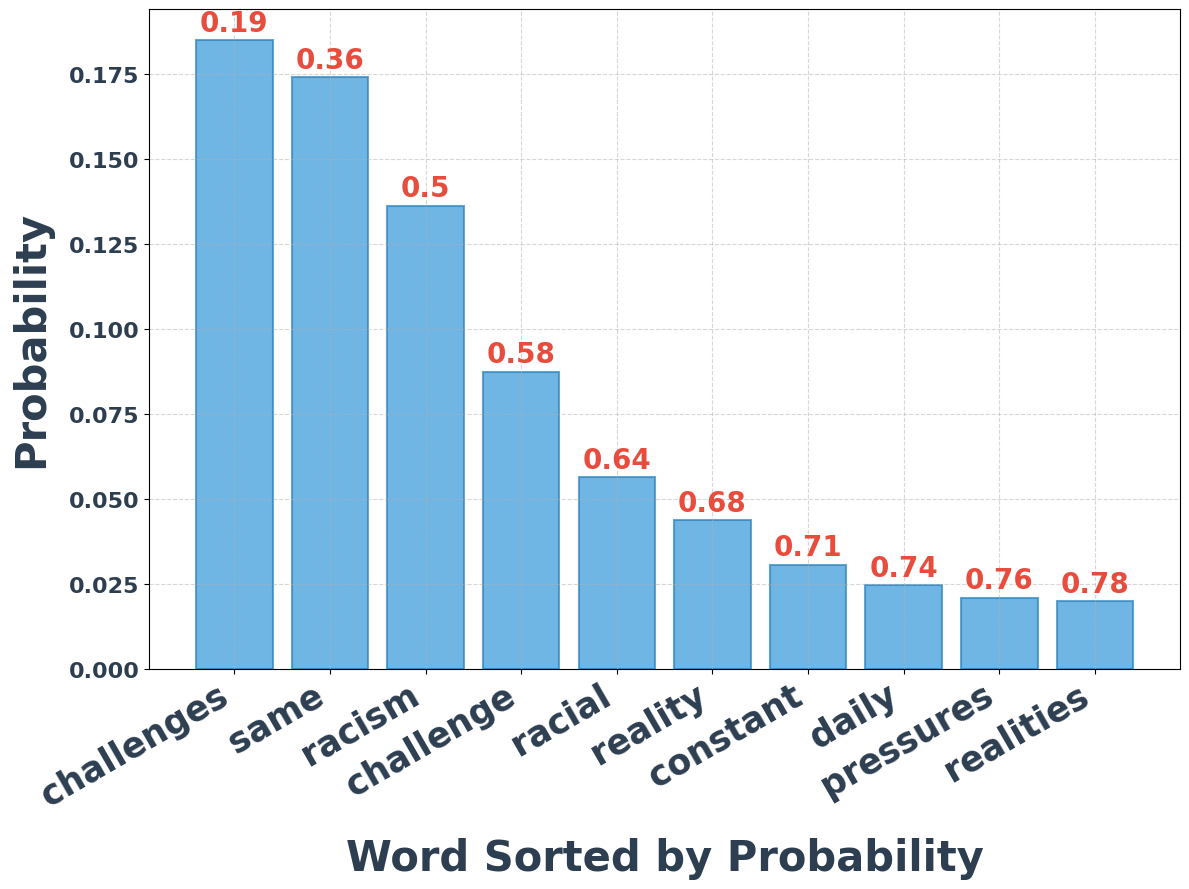}
    \label{fig:generation_2}
  }
  \subfigure[Head distribution given the \textbf{prefix}:  \textit{``attended Columbia University in \underline{\textcolor{red}{New}}''}]{
    \includegraphics[width=5cm]{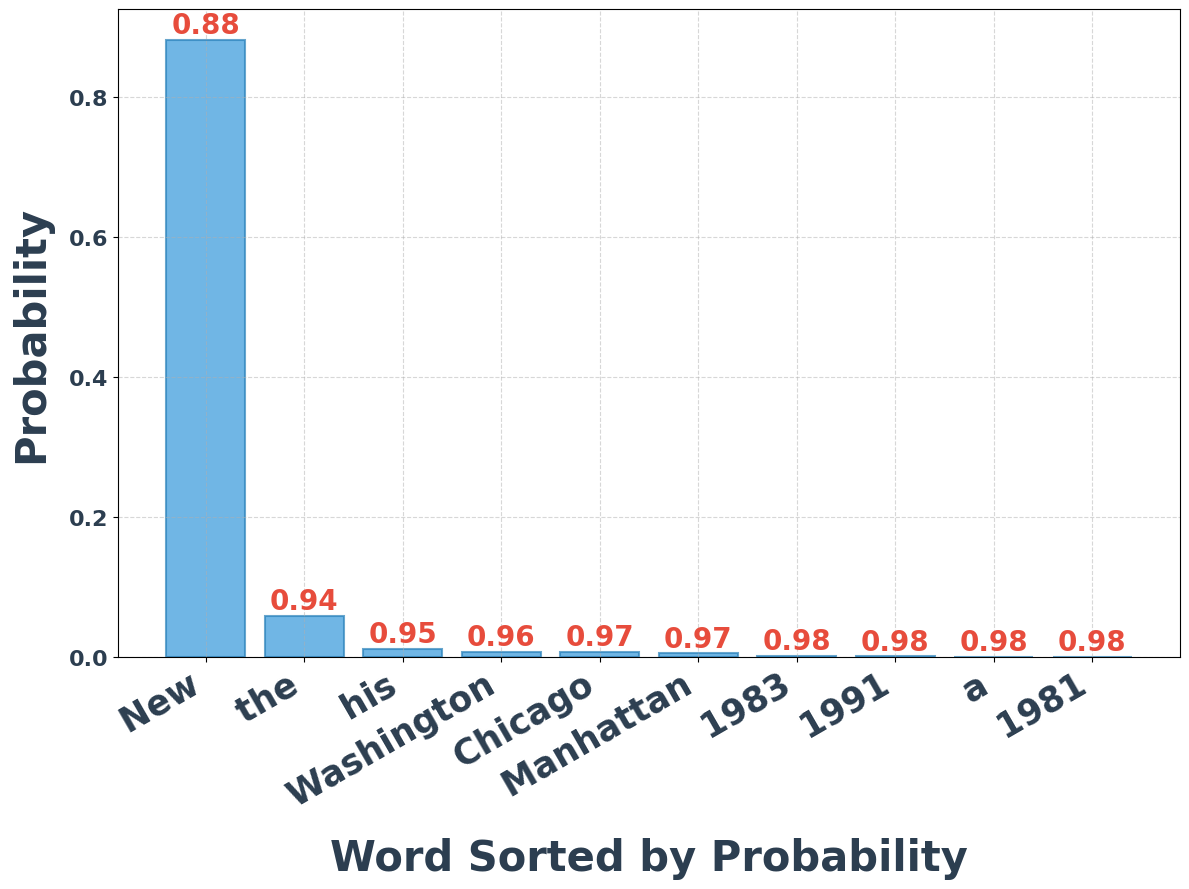}
    \label{fig:generation_3}
}
  \caption{Human written text: ``\textit{Barack Obama was born in 1961. He was raised in Hawaii by his mother and grandparents. Growing up, Obama faced the challenges of being biracial, with a Kenyan father and an American mother. Despite these challenges, he excelled academically and eventually attended Columbia University in New York City.}'' 
  We provide this human-written text for GPT2-XL and use teacher-forcing decoding. 
  }
  \label{fig:generation}
\end{figure*}

For LMs, training solely on maximum likelihood estimation (MLE) without additional fine-tuning, such as human preference alignment, often raises distinct degradation issues~\cite{holtzman2019curious}. These challenges are closely intertwined with decoding algorithms. The decoding algorithm can be seen as a mechanism to truncate the entire vocabulary space into the next candidate token set. When this set is limited, as seen in greedy decoding and beam search, it leads to heightened repetition as the model favors previously decoded tokens \cite{li-etal-2020-dont, zhu-etal-2023-penalty}. Conversely, when this set is expansive, as in top-$p$ sampling, it can result in incoherence and hallucination \cite{xu-etal-2023-look}. This occurs because certain improbable tokens still contribute to the cumulative probability, causing issues when sampling these tokens. Numerous efforts have been devoted to addressing these challenges. Some endeavors focus on enhancing diversity within the specified sampling space \cite{su2022contrastive, xu-etal-2023-look}, while others strive to identify a rational sampling space \cite{meister2023locally, hewitt2022truncation}.

We provide one human-written text example for the GPT2-XL model and use teacher-forcing decoding to demonstrate the distinction between the human-preferred and LM-preferred tokens. In doing so, we save the probability of human-preferred and LM-prefered tokens. As shown in \cref{fig:generation_1}, the choices made by humans and the LM predominantly align at high probability points. The distinction lies in that the portion of the token selected by humans tends to fall within the low probability region predicted by the LM. Simultaneously, the model tends to choose tokens with relatively higher probability. This suggests that the selection of the next token could have many possibilities. 

On the other hand, the distribution predicted by the LM exhibits a long-tailed pattern \cite{holtzman2019curious}, indicating that choosing the next token following the previous context is constrained. The main difference in the long tail distribution is the head shape. For example, as shown in \cref{fig:generation_2}, the distribution is flattened, indicating that the LM has multiple choices of the next token. The diversity of generated text may be compromised if our candidate set is too limited. While in \cref{fig:generation_3}, the distribution is on a sharp shape, implying the restricted choices. In this scenario, an excessively large candidate set could lead to a loss of coherence in the generated text. These observations raise the question: how do we determine the rational size of the candidate set to ensure both diversity and coherence in the generated text?

In this study, we present an \textit{adaptive decoding} algorithm designed to dynamically determine the size of a suitable candidate set during generation. Entropy is a metric to quantify the disorder or randomness within a system. It would be useful in gauging the confidence of an LM in predicting the probability distribution of the next token. We conceptualize identifying the optimal candidate set as an \textit{confidence-increasing} procedure. Specifically, we establish a system embracing two sets: a candidate set, initially empty, and an ordered set, initially encompassing the entire vocabulary. One metric named \textit{confidence} was formulated to gauge system confidence based on entropy. Continuously selecting tokens with the highest probability from the ordered set and appending them to the candidate set significantly enhances the confidence of this system. The rationality of a token to join the candidate set is assessed based on the increment of confidence, allowing the model to dynamically determine a suitable set of candidates.

In our experiments, we performed two open-ended text generation tasks: document continuation and story generation. The results suggest that our approach significantly enhances diversity while preserving coherence in GPT2-XL(1.5B) and Llama2-7B models. On Llama2-7B-chat, coherence has been enhanced alongside increased diversity. Additionally, two mathematical reasoning tasks demonstrate that our method potentially improves the reasoning ability of language models. Human evaluation results show that our approach produces text that is more human-preferred. 

In summary, the main contributions of this paper are as follows: 
\begin{itemize}
    \item We integrate the concept of entropy reduction into the text decoding process and design a confidence increment as the metric to assess the rationality of a token as a candidate. 
    \item We introduce a novel \textit{adaptive decoding} mechanism that enables LMs to determine an appropriately sized set of candidates autonomously during the generation. \footnote{The code is available at \href{https://github.com/zwhong714/adaptive_decoding}{https://github.com/zwhong714/adaptive\_decoding}.}
    \item \textit{Adaptive decoding} effectively generates text in a style resembling human expression, ensuring a balance between coherence and diversity.
\end{itemize}

\section{Problem Formulation}
\subsection{Language Models}
We follow the definitions given in \citet{hewitt2022truncation}. Let the random variable $\boldsymbol{X} = (X_1, X_2, ..., X_T)$ represents a sequence of tokens, where each $X_t$ belongs to a finite vocabulary $\mathcal{V}$. 
We use $\boldsymbol{x}_{<t}$ to indicate a given prefix, $x_t$ as a specific word in $\mathcal{V}$, and $x$ as a random token in $\mathcal{V}$. 
An auto-regressive LM is a distribution $P_\theta(\boldsymbol{X})$ with parameters $\theta$, which is factorized as $P_\theta(x) = \prod_{t=1}^T P_\theta(x_t | \boldsymbol{x}_{<t})$. We denote $P_\theta(X_t | \boldsymbol{x}_{<t})$ as the conditional distribution of the LM given the context $\boldsymbol{x}_{<t}$ over $\mathcal{V}$. The standard method for training an LM using maximum likelihood estimation of the training data $\mathcal{D}$, where log-likelihood is defined as:
\begin{equation}
\label{log_likelihood}
    \mathcal{L}(\boldsymbol{\theta})=\sum_{\boldsymbol{X} \in \mathcal{D}} \log P_{\boldsymbol{\theta}}(\boldsymbol{X}).
\end{equation}

This is also equivalent to minimizing the forward cross-entropy between the true distribution $P^*(\boldsymbol{X})$ and $P_\theta(\boldsymbol{X})$: 
\begin{equation}
    -E_{\boldsymbol{X}\sim P^*}\sum_{t=1}^{|\boldsymbol{X}|} P^*(x|\boldsymbol{x}_{<t})\log{P_{\theta}(x|\boldsymbol{x}_{<t})} 
\end{equation}
And recent LMs have achieved remarkably low forward cross-entropy \cite{radford2019language, touvron2023llama}.

\paragraph{Recall-prioritization}
\label{recall}
However, model training with MLE has inherent limitations for matching model distribution and that of human language \cite{meister2023locally}. This is essentially because cross-entropy is recall-prioritized \cite{meister-etal-2023-efficacy}. Recall here is defined as the coverage of the model distribution $P_\theta$ over the true distribution $P^*$, indicating that a high recall implies tokens with high likelihood under $P^*$ also have high likelihood under $P_\theta$. In other words, cross-entropy focuses on increasing the model likelihood of the ground-truth next token \cite{ren2024emo}. Therefore, according to \cref{log_likelihood}, $P_\theta(x_t|\boldsymbol{x}_{<t})$ must be nonzero for all tokens $x_t$ in every string $X$ in the training dataset $\mathcal{D}$, which leads to multiple possible candidates for the next token. Besides, label smoothing has become a de-facto standard for large-scale training. In practice, there is more than one target next token during training due to the entropy regularization applied to the loss. As indicated in \cref{well-trained}, the model prediction for the next token may not be precisely identical to the teacher-forced token. Nevertheless, it is essential to recognize that the model exhibits a significant capability in predicting the next token.

\begin{table}[t]
\caption{Preliminary experiment: HIT@$k$ is a metric that measures the human preferred token falls into the top-$k$ space of an LM predicted distribution. Results obtained by 1000 human-written text from WikiText-103.}
\label{well-trained}
\vskip 0.15in
\begin{center}
\begin{small}
\begin{sc}
\begin{tabular}{lcccr}
\toprule
Model & hit@1 & hit@3 & hit@5 \\
\midrule
gpt2   & 32.37 & 49.89 & 57.01  \\
gpt2-xl    & 37.09 & 55.21  &    62.46     \\
Llama2-7B    & 47.14 & 65.40 & 72.08   \\
Llama2-chat-7B    & 45.04 & 62.81 & 69.57  \\
\bottomrule
\end{tabular}
\end{sc}
\end{small}
\end{center}
\vskip -0.1in
\end{table}

\paragraph{Text generation}
LMs can assess the likelihood of existing sequences and produce new tokens through the expression $x \sim P_\theta(\boldsymbol{X})$. The common application involves the generation of open-ended text. When presented with a sequence of $m$ tokens extracted from natural language, denoted as $\mathcal{C} = \{x_1, ..., x_m\}$, serving as context, the LM aims to construct an $n$-tokens extension. This extension is decoded using the probability distribution predicted by the LM: $P_\theta(x_{m+1}:x_{m+n}|\mathcal{C}) = \prod_{t=1}^n P_\theta(x_t|\mathcal{C}, x_{m+1}, ..., x_{m+t-1})$. The generation of the continuation occurs token by token, employing a specific decoding strategy.

\subsection{Degeneration Problems}

\paragraph{Repetition}
Repetition often arises due to models trained using MLE without any intervention, leading to a gradual increase in the probability of tokens that have already occurred \cite{xu2022learning, zhu-etal-2023-penalty}. This tendency becomes particularly pronounced when utilizing greedy decoding or beam search~\cite{holtzman2019curious}. While this behavior aligns with the correct objective of MLE-trained models to continually maximize likelihood in the text generation process, it falls short of generating text preferred by humans. As illustrated in \cref{degeneration}, this issue appears to be independent of the size of the model parameters. Even the GPT-3 base model(175B) exhibits significant repetition problems when using greedy decoding.

\paragraph{Incoherence}
Many decoding algorithms have been proposed to truncate low probability tokens directly and have proven to be useful \cite{holtzman2019curious, hewitt2022truncation, meister2023locally}. They typically compute the following truncated distribution at each time step:

\begin{equation}
\label{eq: trunc}
    P_{\text {trunc }}\left(x \mid \boldsymbol{x}_{<t}\right)= \begin{cases}P_\theta\left(x \mid \boldsymbol{x}_{<t}\right) / Z_{\boldsymbol{x}_{<t}} & x \in \mathcal{A}_{\boldsymbol{x}_{<t}} \\ 0 & \text { o.w, }\end{cases}
\end{equation}

where we denote $\mathcal{A}_{\boldsymbol{x}_{<t}} \subseteq \mathcal{V}$ as the allowed set comprising candidate following tokens for a given prefix, and $Z_{\boldsymbol{x}_{<t}} = \sum_{x \in \mathcal{A}_{\boldsymbol{x}_{<t}}} P_{\theta}(x|\boldsymbol{x}_{<t})$ is the renormalization term.

Effective truncation is the key to improving the quality of text generation. Otherwise, there is a risk of incoherent and hallucinatory outputs, as the model may sample from low-probability tokens that may not be related to the prefix. As we can see from \cref{degeneration}, the text is highly incoherent with the text generation going on when we use top-$p$ sampling \cite{holtzman2019curious}.

\begin{table}[t]
\caption{Degeneration problem: repetition and incoherence}
\label{degeneration}
\vskip 0.15in
\begin{center}
\begin{small}
\begin{sc}
\begin{tabular}{p{0.45\textwidth}}
\toprule
\textnormal{\textbf{GPT3 base model (davici-002), greedy}} \\
\midrule
\textnormal{\textbf{Prefix}: \textit{``Barack Obama was born in Honolulu, Hawaii. He was born in"}}\\\midrule
\textnormal{\textbf{(Repetition) Continuation}: \textit{``1961. He was born on August 4, 1961. Barack Obama was born in Honolulu, Hawaii on August 4, 1961. Barack Obama was born in Honolulu" }}\\
\midrule
\textnormal{\textbf{GPT2-XL base model, top-$\boldsymbol{p=0.95}$}} \\
\midrule
\textnormal{\textbf{Prefix}: \textit{``Barack Obama was born in Honolulu, Hawaii. He was born in"}}\\\midrule
\textnormal{\textbf{(Incoherence) Continuation}: \textit{``Hawaii, and he is a legitimate U.S. citizen. He's the president, and I'm the president of the United States." — Nov. 14, 2009"}}\\
\bottomrule
\end{tabular}
\end{sc}
\end{small}
\end{center}
\vskip -0.1in
\end{table}

\section{Preliminaries}
\label{pre}

\begin{definition} 
\label{def: entropy} 
Given a discrete random variable $X$,  which takes values in the vocabulary $\mathcal{V}$ and is distributed according to $p: P_\theta(X) \rightarrow [0, 1]$, the \textit{entropy} is defined as 
$$\mathrm{H}(X):=-\sum_{x \in \mathcal{V}} p(x) \log p(x).$$
\end{definition}

\begin{theorem}
\label{thm: upper}
For any random variable $X$, 
$$0 \le H(X) \le \log |\mathcal{V}|,$$
where $|\mathcal{V}|$ denotes the size of the vocabulary $\mathcal{V}$. The upper bound is tight if and only if $X$ is distributed uniformly on $\mathcal{V}$. The lower bound is tight if and only if $X$ is deterministic. 
\end{theorem}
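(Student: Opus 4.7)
The plan is to apply Jensen's inequality to the concave logarithm, exploiting the observation that $H(X)$ can be written as an expectation. I would first rewrite $H(X) = \sum_{x \in \mathcal{V}} p(x) \log \tfrac{1}{p(x)}$, where terms with $p(x) = 0$ are taken to be zero by the standard convention $0 \log 0 = 0$. Restricting the sum to the support $S := \{x \in \mathcal{V} : p(x) > 0\}$, this expression is precisely $\mathbb{E}_{X \sim p}\!\left[\log \tfrac{1}{p(X)}\right]$.

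Next I would invoke Jensen's inequality: since $\log$ is concave, $\mathbb{E}\!\left[\log \tfrac{1}{p(X)}\right] \le \log \mathbb{E}\!\left[\tfrac{1}{p(X)}\right]$. The inner expectation collapses to $\sum_{x \in S} p(x) \cdot \tfrac{1}{p(x)} = |S| \le |\mathcal{V}|$, and monotonicity of $\log$ then gives $H(X) \le \log |\mathcal{V}|$, which is the desired bound.

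For the equality characterization, I would track when each of the two inequalities in the chain above is tight. By strict concavity of $\log$, Jensen's inequality is strict unless $1/p(X)$ is almost-surely constant on $S$, which forces $p$ to take a single value on its support; normalization then pins this value to $1/|S|$. The second inequality $|S| \le |\mathcal{V}|$ is tight exactly when $S = \mathcal{V}$. Both conditions hold simultaneously if and only if $p(x) = 1/|\mathcal{V}|$ for every $x \in \mathcal{V}$, i.e., $X$ is uniformly distributed on $\mathcal{V}$.

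The main obstacle is purely notational, namely handling the $0 \log 0$ convention cleanly when passing from a sum over $\mathcal{V}$ to an expectation over the support $S$, so that Jensen's inequality applies to a bona fide probability measure. A cleaner alternative that sidesteps this issue entirely is to use the non-negativity of the KL divergence against the uniform distribution $u(x) = 1/|\mathcal{V}|$: a direct expansion yields $0 \le D_{\mathrm{KL}}(p \,\|\, u) = \log |\mathcal{V}| - H(X)$, with Gibbs' inequality giving equality precisely when $p = u$.
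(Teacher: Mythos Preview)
Your proof is correct and complete: the Jensen's-inequality argument is the textbook route, and your alternative via $D_{\mathrm{KL}}(p\,\|\,u) \ge 0$ is an equally standard and arguably cleaner way to handle both the bound and the equality case at once. There is nothing to compare against, however, because the paper does not actually prove this theorem---it is stated in the preliminaries as a known fact from information theory and then immediately used, with no accompanying argument.
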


By \cref{thm: upper}, the upper bound is tight when token distribution predicted by the LM is uniform, which means each token in the vocabulary $\mathcal{V}$ has the same probability of being sampled. However, the upper bound is often impossible to achieve in the real world. Moreover, the true upper bound is frequently linked to the model's capability to predict the next token.

To mitigate the impact of the upper bound, we normalized entropy using the min-max approach. (1) This ensures that all changes fall within a similar range, enabling the comparison and analysis of distributions predicted by different models. (2) It also facilitates comparing and analyzing entropy reduction processes.

\begin{definition} 
\label{def:confidence}
For any random variable $X$, the \textit{confidence} $\text{Conf}(X)$ is defined as the min-max scaling of entropy, and the range is $[0, 1]$:
$$\text{Conf}(X):= 1 + \frac{\sum_{x \in \mathcal{V}} p(x) \log p(x)}{\log |\mathcal{V}|}.$$
\end{definition}

The expression $\sum_{x \in \mathcal{V}} p(x) \log p(x)$ can be conceptually divided into two components. The first component is the known part $\sum_i^{k} p_i \log p_i$, where $k$ represents the number of known items. The second component is the unknown part, introducing maximum uncertainty, expressed as $({1 - \sum_{i}^{k}p_i}) \log\frac{1 - \sum_{i}^{k}p_i}{|\mathcal{V}|-k}$. Consequently, we can define a pivot $k$ to divide the distribution $P_\theta(X)$ based on the number of known items. This metric related to $k$ is as follows:
\begin{equation}
\label{conf}
\begin{aligned}
    \text{Conf}_k(X) = 1 + &\frac{1}{\log|\mathcal{V}|}(\sum_i^{k} p_i \log p_i \\
    & + ({1 - \sum_{i}^{k}p_i}) \log\frac{1 - \sum_{i}^{k}p_i}{|\mathcal{V}|-k} ),
\end{aligned}
\end{equation}
where $p_i$ is the abbreviation of $p(x_i)$.

\subsection{Conf$_k(X)$ Is Monotonically Increasing}
\label{increase}
We define the rationale for including a token in the candidate set as the extent to which the presence or absence of this token affects the model's confidence. Specifically, it is defined as the difference in \cref{conf} between neighboring pivots.
\begin{equation}
\label{eq: Deltaconf}
\begin{aligned}
&\Delta \text{Conf} =\frac{1}{\log|\mathcal{V}|} (\text{Conf}_k(X)- \text{Conf}_{k-1}(X))
\\&= \frac{1}{\log|\mathcal{V}|} ( p_k\log p_k + (1 - \sum_i^k p_i) \log\frac{(1 - \sum_i^k p_i)}{|\mathcal{V}| - k} \\
&- (1 - \sum_i^{k-1} p_i) \log\frac{(1 - \sum_i^{k-1} p_i)}{|\mathcal{V}| - k + 1}).
\end{aligned}
\end{equation}

In \cref{proof}, we demonstrate that \cref{eq: Deltaconf} is greater than zero when $P_\theta(X)$ is sorted in decreasing order. Under this condition, confidence steadily increases as $k$ continues to grow.

\subsection{Descending Trend in $\Delta$Conf}
\label{dconf}
Although $\Delta$Conf may lack a clear monotonic property, assessing its variability is feasible by examining its upper and lower bounds. Utilizing \cref{eq:ineq1}, we can establish the lower bound as $\Delta \text{Conf} \ge \frac{p_k}{\log|\mathcal{V}|} \log(|\mathcal{V}| - k + 1)$, while employing both \cref{eq:cor} and \cref{eq:ineq1} allows us to derive the upper bound, namely $\Delta \text{Conf} \le \frac{(1 - \sum_i^{k - 1} p_i)}{\log|\mathcal{V}|} \log(|\mathcal{V}| - k + 1)$.

As the number of known items $k$ increases, it is observed that the probability $p_k$ decreases. Additionally, both $\log(|\mathcal{V}| - k + 1)$ and $(1 - \sum_i^{k - 1} p_i)$ exhibit a decreasing trend. Ultimately, the upper and lower bounds converge to zero. This implies that $\Delta$Conf also demonstrates a decreasing trend, a pattern influenced by the interplay between the probability $p_k$ and the number of known items $k$.

\subsection{Approximation by Truncation}
\label{approximation}
\begin{figure}[ht]
\vskip 0.2in
\begin{center}
\centerline{\includegraphics[width=\columnwidth]{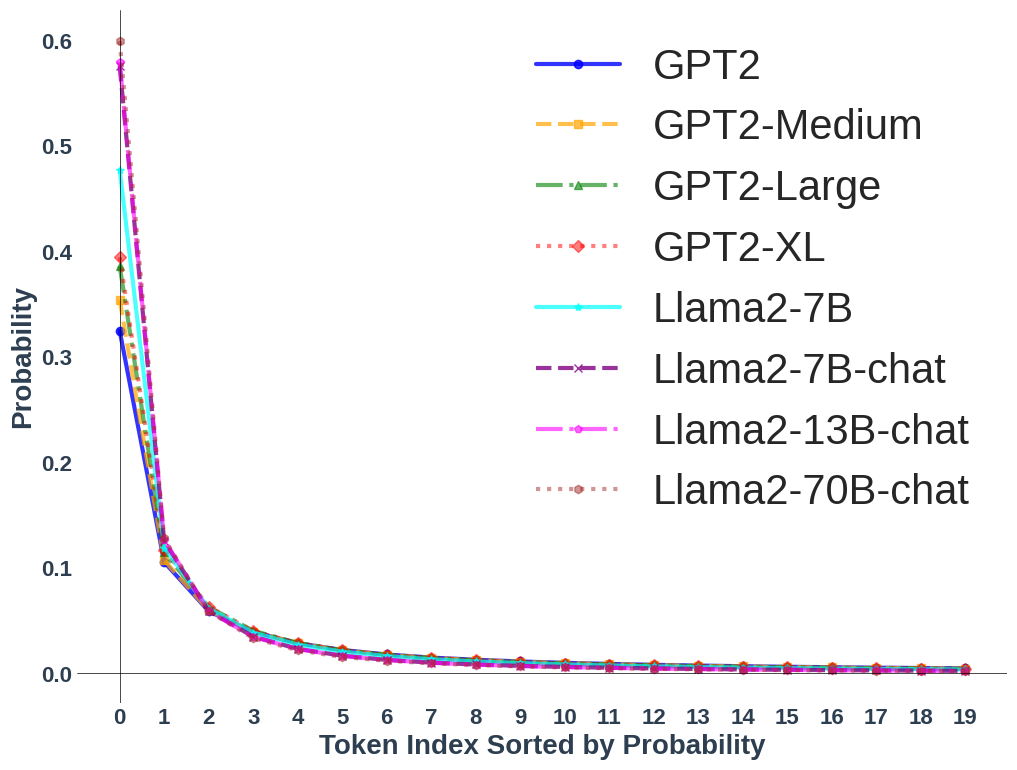}}
\caption{Preliminary experiment: Averaged top-20 token distribution of different LMs on 1000 human-written texts from WikitText-103.}
\label{fig:long_tailed}
\end{center}
\vskip -0.2in
\end{figure}


As illustrated in \cref{fig:long_tailed}, the token distribution predicted by the LM exhibits a pronounced long-tailed pattern. Notably, this phenomenon becomes more conspicuous as the LM parameters increase and a human-alignment operation follows. 

\begin{corollary}
\label{cor: truncation}
When a distribution exhibits a long-tail shape, the entropy of the truncated distribution $\mathrm{H}(X(k))$ is approximately equal to the original entropy and an error within a smaller range of $\epsilon$.
\begin{equation}
\begin{aligned}
\label{trunc}
\mathrm{H}(X) = \mathrm{H}(X(k)) + \epsilon,
\end{aligned}
\end{equation}
where $X(k)$ follows the truncated distribution as defined by the \cref{eq: trunc}.
\end{corollary}

\cref{trunc} shows we can determine a reasonable candidate set size $k$ by setting a reasonable threshold $\epsilon$.

\renewcommand{\algorithmicrequire}{\textbf{Input:}}
\renewcommand{\algorithmicensure}{\textbf{Output:}}
\newcommand*\ANNOTATE[1]{\hfill\(\triangleright\) #1}%

\section{Adaptive Decoding}
Our method adheres that the optimal threshold must balance diversity (i.e., including as many tokens as possible in the candidate set) and coherence (i.e., avoiding the inclusion of tokens outside the true support) \cite{finlayson2024closing}.

\begin{figure}[ht]
\vskip 0.2in
\begin{center}
\centerline{\includegraphics[width=\columnwidth]{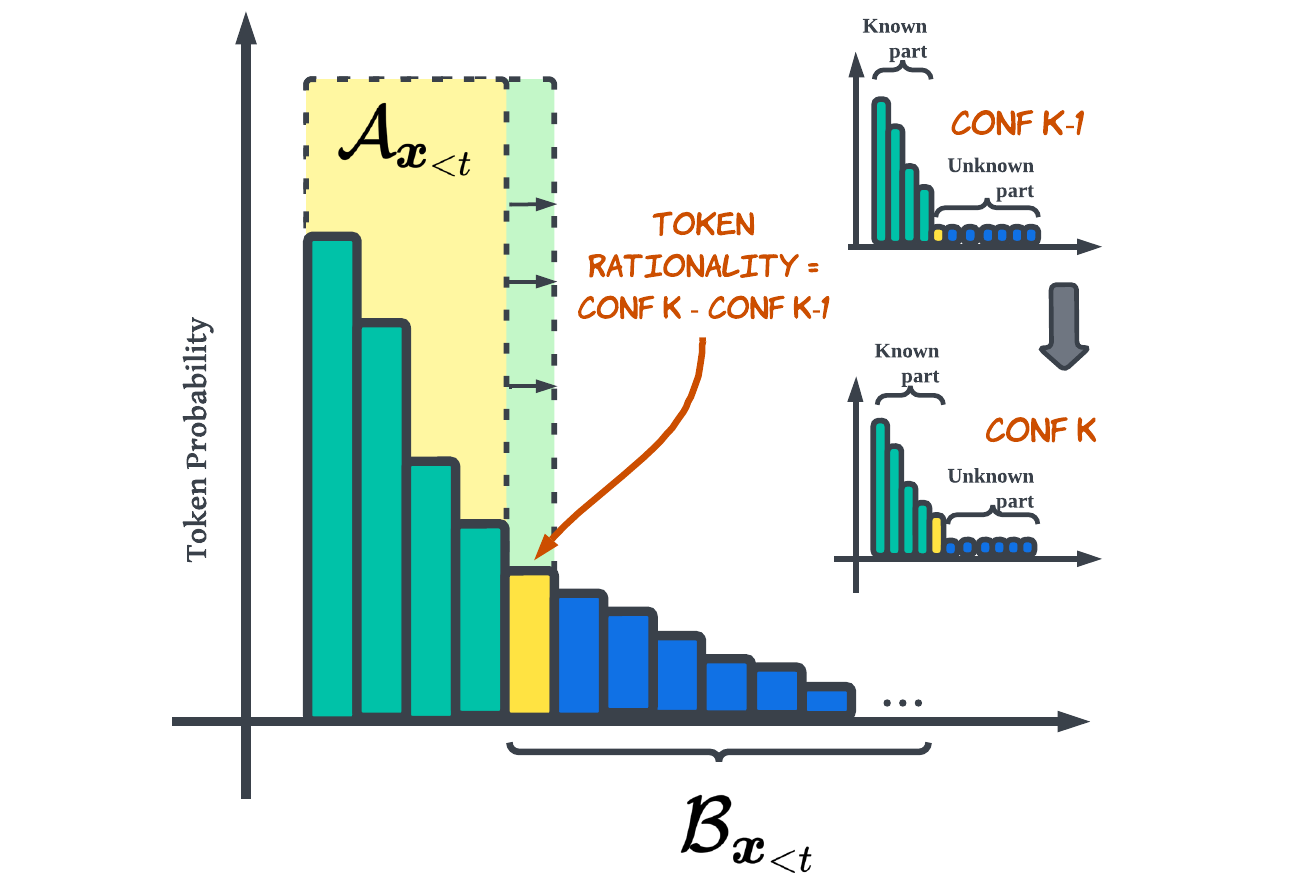}}
\caption{How adaptive decoding works. Expanding the candidate set $\mathcal{A}_{\boldsymbol{x}_{<t}}$ depends on the rationality of the token. This rationality is determined by the extent to which the model's confidence increases when considering the inclusion of this token.}
\label{fig: visualization}
\end{center}
\vskip -0.2in
\end{figure}

\paragraph{Initialization}  For clarity, we define the known portion, as discussed in \cref{pre}, as the next token candidate set $\mathcal{A}_{\boldsymbol{x}_{<t}}$, and the unknown part as $\mathcal{B}_{\boldsymbol{x}_{<t}}$. The initialization involves setting $\mathcal{A}_{\boldsymbol{x}_{<t}}$ to an empty set and $\mathcal{B}_{\boldsymbol{x}_{<t}}$ to an ordered collection of the entire vocabulary. The order is based on the descending probabilities predicted by the LM.

\paragraph{Truncation} As illustrated in \cref{fig: visualization}, identifying a suitable set of candidates can be viewed as an iterative procedure. This involves transferring the token with the highest probability from $\mathcal{B}_{\boldsymbol{x}_{<t}}$ to $\mathcal{A}_{\boldsymbol{x}_{<t}}$. Subsequently, we calculate the incremental change in the Conf metric, as defined in \cref{eq: Deltaconf}. This increment serves as a measure of the rationality of a token within the candidate set. The iteration continues until the rationality falls below a pre-set threshold, denoted as $\epsilon$. At this point, we finalize $\mathcal{A}_{\boldsymbol{x}_{<t}}$ as our truncation space. 

\paragraph{Computational complexity}
Efficient implementation of adaptive decoding is achievable through \cref{algorithm}. Firstly, sorting words by their predicted probabilities can be accomplished in $\mathcal{O}(|\mathcal{V}|\log|\mathcal{V}|)$. Secondly, the calculation of the increment $\Delta$Conf, as described in \cref{eq: Deltaconf}, is a $\mathcal{O}(|\mathcal{V}|)$ operation. By getting $\Delta$Conf, we can effectively identify the last index with a value below the pre-set threshold in $\mathcal{O}(\log|\mathcal{V}|)$. In summary, our algorithm exhibits a time complexity of $\mathcal{O}(|\mathcal{V}|\log|\mathcal{V}|)$. However, we can fully leverage the parallel computing power of GPUs to accelerate the computation process.

\begin{algorithm}[H]
\caption{Adaptive Decoding Algorithm}
\begin{algorithmic}
\label{algorithm}
\REQUIRE Prefix $\mathcal{C} = \{x_1, \dots, x_m\}$; Language Model $P_{\theta}$; Vocabulary size $|\mathcal{V}|$; Threshold $\epsilon$
\ENSURE Continuation $\mathcal{G} = \{x_{m+1}, x_{m+2}, \dots, x_{m+n}\}$

\STATE $\mathcal{G} \gets \{\}$

\FOR{$t = m+1$ {\bfseries to} $m+n$}
    \STATE $p \gets P_{\theta}(x_t|\mathcal{C})$ 
    \STATE $p \gets \text{sort}(p, \text{decreasing order})$
    \STATE $\text{cumsum} \gets \text{cumulative sum of } p$
    \STATE $\text{residual\_index} \gets |\mathcal{V}| - \text{range}(1, |\mathcal{V}| + 1)$
    
    \STATE $term1 \gets p \cdot \log\left(\frac{p \cdot \text{residual\_index}}{1 - \text{cumsum}}\right)$
    \STATE $term2 \gets \log\left(\frac{1 - \text{cumsum}}{\text{residual\_index}}\right) - \log\left(\frac{1 - \text{cumsum} + p}{\text{residual\_index} + 1}\right)$
    \STATE $\Delta \text{Conf} \gets \frac{term1 + (1 - \text{cumsum} + p) \cdot term2}{\log |\mathcal{V}|}$
    
    \STATE $k \gets \max(1, \text{LastIndex}(\Delta \text{Conf} > \epsilon))$
    \STATE \textcolor{green!60!black}{$\triangleright$ top-$k$ sampling}
    \STATE $x_t \gets \text{sample from } \mathcal{V}^{(k)}$ 
    \STATE $\mathcal{G} \gets \mathcal{G} \cup \{x_t\}$ 
\ENDFOR

\STATE {\bfseries Return:} $\mathcal{G}$

\end{algorithmic}
\end{algorithm}

\begin{table*}[t]
\caption{Main experiments are conducted by repeating the sampling process five times for each prefix, and the average score is reported. Metrics where the algorithm outperforms others are highlighted in bold. * denotes a significant difference compared to other methods ($p < 0.05$).}
\label{main-exp}
\centering
\small 
\setlength\tabcolsep{2.5pt} 
\begin{tabular}{llcccccccccccc}
\toprule
\multirow{2}{*}{\textbf{LM}}& \multirow{2}{*}{\textbf{Decoding}}& \multicolumn{6}{c}{\textbf{WritingPrompts}} & \multicolumn{6}{c}{\textbf{WikiText-103}}\\
\cmidrule(lr){3-8} \cmidrule(lr){9-14}
& &\textbf{rep-2} & \textbf{rep-3} & \textbf{rep-4} & \textbf{Diversity} & \textbf{MAUVE} & \textbf{Coherence}   &\textbf{rep-2} & \textbf{rep-3} & \textbf{rep-4} & \textbf{Diversity} & \textbf{MAUVE} & \textbf{Coherence}   \\
\midrule
&Human& 3.63   & 0.87   & 0.39  &  95.27    & -- & 27.12 & 5.42 & 1.20 & 0.41 & 93.18 & -- & 62.94  \\
\midrule
\multirow{6}{*}{\rotatebox{90}{GPT2-XL}} &greedy & 82.00  &80.62   & 79.66   & 1.82   & 0.61  &  \textbf{56.27$^*$} & 71.86  & 68.33  & 65.95  & 6.98   & 2.03 & 57.81  \\
&top-$k$ & 12.78 & 6.68 & 4.70 & 80.50 & 22.28 & 47.69 & 9.96 & 3.57 & 1.96 & 86.08 & 77.15 & \textbf{61.19}  \\
&mirostat &  58.48 &  51.81 & 47.32 & 18.17 & 5.61 & 50.10 & 45.38 & 36.21 & 30.74 & 31.11 & 36.06 & 60.51 \\
&top-$p$ &  12.72 & 6.86 & 4.96 & 80.24 & 21.20 & 48.01 &  8.11 & 2.74 & 1.56 & 88.82 & 78.22 & 60.00  \\
&typical & 13.28  & 7.62 & 5.83 & 79.51 & 20.36 & 46.98 & 6.73 & 2.13 & 1.19 & 90.71 & 78.49 & 59.66 \\
&$\eta$-sampling & 8.92 & 4.43 & 3.33 & 86.26 & 22.12 & 45.15 &  7.80 & 2.55 & 1.45 & 89.44 & 77.94 & 59.90  \\
&adaptive  (ours)& \textbf{6.86$^*$} & \textbf{3.66$^*$} & \textbf{3.07$^*$} & \textbf{89.30$^*$} & \textbf{25.18$^*$} &  37.71 &  \textbf{5.20$^*$} & \textbf{1.43$^*$} & \textbf{0.80$^*$} & \textbf{93.23$^*$} & \textbf{79.57} & 55.98 \\
\midrule
&Human& 7.44 & 1.59 & 0.64 & 90.67 & -- & 32.84 & 4.73 & 1.00 & 0.33 & 94.10 & -- & 60.34 \\
\midrule
\multirow{6}{*}{\rotatebox{90}{Llama2-7B}} & greedy & 77.32 & 74.67  & 72.84  & 5.58   & 1.51  & 46.40 & 43.71  & 36.70 & 32.09   & 34.54  & 22.28  & 55.85  \\
&top-$k$ & 9.13 & 4.45 & 3.27 & 87.18 & 23.57 & 23.02 & 7.18 & 2.44 & 1.20 & 90.01 & 74.80 & 58.49 \\
&top-$p$ & 8.33 & 4.10 & 3.29 & 88.38 & 23.60 & 23.05 & 7.07 & 2.49 & 1.33 & 90.05 & 77.19 & 58.19\\
&mirostat &  13.93 & 8.49 & 6.65 & 79.71 & 21.39 &  \textbf{56.23$^*$} & 12.45 & 6.14 & 3.92 & 80.63 & 69.22 & \textbf{65.79$^*$} \\
&typical & 7.16 & 3.55 & 2.91 & 90.33 & 25.78 & 22.56 & 7.06 & 2.48 & 1.27 & 90.08 & 76.04 & 57.96\\
&$\eta$-sampling & \bf 6.60$^*$ & 2.96 & 2.35 & \textbf{90.99} & 24.43 & 23.19  & \textbf{5.93$^*$} & \textbf{1.84$^*$} & \textbf{0.90$^*$} & \textbf{91.96$^*$} & \textbf{78.26} & 57.46\\
&adaptive (ours)& 7.02 & \textbf{2.63$^*$} & \textbf{1.58$^*$} & 89.98 & \textbf{35.26$^*$} & 35.10 & 6.81 & 2.31 & 1.24 & 90.55 & 76.74 & 58.22 \\
\bottomrule
\end{tabular}
\end{table*}

\section{Experiments}
\subsection{Setups}
\paragraph{Models} We assess the performance of our method on the GPT2-XL model \cite{radford2019language}, the Llama2-7B model, and its associated chat version \cite{touvron2023llama}. 

\paragraph{Datasets}
We explore two open-ended text generation applications: document continuation using the WikiText-103 dataset \cite{merity2017pointer}, which contains a large collection of Wikipedia articles. Another is story generation on the WritingPrompts dataset \cite{fan2018hierarchical}, a notably challenging endeavor. This task involves crafting imaginative continuations based on abstract, high-level story prompts contributed by online users. 

\paragraph{Baselines}
Our proposed algorithm is compared against various decoding algorithms. Top-$k$ method \cite{fan2018hierarchical} maximizes $\sum_{x \in \mathcal{A}_{\boldsymbol{x}_{<t}}} P_\theta\left(x \mid \boldsymbol{x}_{<t}\right)$ while constraining $|\mathcal{A}_{\boldsymbol{x}_{<t}}| = k$. Here, $k$ serves as the predefined hyperparameter. Top-$p$ method \cite{holtzman2019curious} minimizes $|\mathcal{A}_{\boldsymbol{x}_{<t}}|$ while constraining $\sum_{x \in \mathcal{A}_{\boldsymbol{x}_{<t}}} P_\theta\left(x \mid \boldsymbol{x}_{<t}\right) \geq p$. $p$ is the preset hyperparameter. Mirostat method \cite{basu2020mirostat} is perplexity-controlled by estimating $|\mathcal{A}_{\boldsymbol{x}_{<t}}|$ based on Zipf's law and the target cross-entropy. Typical decoding \cite{meister2023locally} tries to minimize the absolute distance $\sum_{x \in \mathcal{A}_{\boldsymbol{x}_{<t}}} \mid \mathrm{H}\left(X_t \mid \boldsymbol{X}_{<t}=\boldsymbol{x}_{<t}\right)  +\log P_\theta\left(x \mid \boldsymbol{x}_{<t}\right)|$, by constraining the $\sum_{x \in \mathcal{A}_{\boldsymbol{x}_{<t}}} P_\theta\left(x \mid \boldsymbol{x}_{<t}\right) \geq \tau$. $\tau$ is the preset hyperparameter. $\eta$-sampling \cite{hewitt2022truncation} considers the allowed set $\mathcal{A}_{\boldsymbol{x}_{<t}} = \{x\in \mathcal{V} |P_\theta(x|\boldsymbol{x}_{<t}) > \eta\}$, where $\left.\eta=\min \left(\epsilon, \alpha \exp \mathrm{H}\left(X_t \mid \boldsymbol{X}_{<t}=\boldsymbol{x}_{<t}\right)\right)\right\}$, and $\alpha$ and $\epsilon$ are hyperparameters. 

\paragraph{Automatic evaluation}
\textbf{Repetition} is defined as $\textbf{rep-n} = 1.0 - \frac{\#\text{unique n-gram}}{\#\text{total n-gram}}$ to measure sequence-level repetition according to the portion of duplicate n-grams \cite{Welleck2020Neural}. \textbf{Diversity} considers the repetition of generated text at different $n$-gram levels and can be calculated as follows: $\prod_{n=2}^4(1.0 - \frac{\textbf{rep-n}}{100})$ \cite{meister2023locally}. \textbf{MAUVE} \cite{pillutla2021mauve} is a metric that quantifies the similarity in token distribution between generated text and human-written text. Semantic \textbf{coherence} is defined as $v_{\mathcal{C}}^{\top} v_{\mathcal{G}} /\left(\left|v_{\mathcal{C}}\right| \cdot\left|v_{\mathcal{G}}\right|\right)$, where $v_{\mathcal{C}}=\operatorname{SimCSE}(\mathcal{C})$ and $v_{\mathcal{G}}=\operatorname{SimCSE}(\mathcal{G})$. It is computed as the cosine similarity between the sentence embeddings of the prefix and the continuation, represented by SimCSE \cite{gao-etal-2021-simcse}.

\subsection{Implementation Details}
We randomly select 1200 data samples from the training set of each dataset, use 1000 samples of them to evaluate different decoding algorithms and use the remaining 200 samples to select hyperparameters. Hyperparameter scans can be found in \cref{hyperparameter}, and we select the hyperparameters that result in the optimal MAUVE score \cite{meister2023locally} to guarantee fairness. In the context of WritingPrompts, we directly employ the original prompt from the dataset to feed models to execute the content continuation task. In the case of WikiText-103, a prefix with fixed-length 32 is utilized for the continuation task. The maximum generation length is constrained to 256 tokens for both datasets.  To ensure robustness, we conduct the sampling process five times for each prefix and subsequently report the average score. Furthermore, all decoding methods adopt multinomial sampling within the truncated space.

\subsection{Results}

The primary experiments are presented in \cref{main-exp}. The MAUVE score is measured by referencing human-written texts. From the perspective of human texts, there is a high diversity. Since the WritingPrompts can have many story continuations given a prompt, coherence on this dataset is low even when written by humans. It can be seen that using the greedy decoding method for LMs trained by MLE would produce a serious repetition phenomenon, leading to high coherence but low MAUVE metric and low diversity. This indicates that the generated text is in contrast to the human-written text. Conversely, this phenomenon is significantly improved when we expand the set of allowable samples for the next token. Therefore, we must combine these metrics to analyze the results in \cref{main-exp}.

When utilizing the adaptive decoding method, the two LMs generally perform better than other methods on two datasets regarding MAUVE and diversity. Besides, on the WritingPrompts dataset, we improved coherence compared with written text by humans. This observation suggests that our approach is well-suited for datasets like WritingPrompts, which excels in producing higher-quality text with limited prompts. In contrast to our approach, Mirostat determined the size of candidate sets by managing perplexity, resulting in a notable enhancement in coherence. Nevertheless, upon examining the diversity and MAUVE indicators, it appears this could be attributed to partial repetition of generation. Additionally, we found that $\eta$-sampling is quite competitive to our proposed method, and we perform a detailed comparison in \cref{analysis}.

\subsection{Human Evaluation}
We randomly select 200 samples from WikiText-103 for human evaluation, as the prefixes of this dataset offer ample contextual information. The evaluation guidance is outlined in \cref{app: human-eval}, with the results presented in \cref{human-eval}.

\begin{table}[thp]
\caption{Human evaluation of 200 WikiText-103 samples using the Llama2-7B model. * denotes a significant difference compared to other methods ($p < 0.05$).}
\label{human-eval}
\vskip 0.15in
\begin{center}
\begin{small}
\begin{tabular}{ccccc}
\toprule
\multicolumn{2}{c}{\textbf{A Is Better}} & \textbf{Neutral} & \multicolumn{2}{c}{\textbf{B Is Better}}\\
\midrule
adaptive & 4.000 & 45.000 & \textbf{51.000$^*$} & human \\
adaptive & \textbf{64.500$^*$} & 17.000 & 18.500 & top-$k$ \\
adaptive & \textbf{64.000$^*$}  & 15.000 & 21.000 & top-$p$ \\
adaptive & \textbf{50.125$^*$}  & 30.875 & 19.000 & $\eta$-sampling \\
\bottomrule
\end{tabular}
\end{small}
\end{center}
\vskip -0.1in
\end{table}
Observing \cref{human-eval}, it becomes evident that while the adaptive decoding algorithm falls short of achieving parity with human-generated content, a notable 45\% of the evaluated data poses a challenge in distinguishing between human-written and adaptively decoded text. Simultaneously, it is apparent that text decoded through adaptive decoding surpasses that generated through top-$k$, top-$p$, and $\eta$-sampling algorithms. Furthermore, during human evaluation, we found that $\eta$-sampling occasionally generates excessively short text.

\subsection{Adaptive Decoding for Policy Fine-tuned Model}
\label{policy}
Llama 2-chat undergoes fine-tuning with an objective aligned to human preferences rather than MLE, as detailed in \citet{touvron2023llama}. We evaluate the performance of adaptive decoding compared to top-$p$ sampling on the Llama2-7B-chat model. We utilized its official generation configuration with the temperature set to 0.6. As illustrated in \cref{ana2}, our decoding algorithm demonstrates the ability to enhance generation quality. This improvement is marked by increased diversity without compromising coherence. 

\begin{table}[htp]
\caption{Experiments are conducted by repeating the sampling process five times for each prefix from WritingPrompts, and the average score is reported. $p$ is set to $0.95$ while threshold for adaptive decoding is set to $0.0005$. * denotes a significant difference compared to other methods ($p < 0.05$).}
\label{ana2}
\centering
\small 
\setlength\tabcolsep{6pt} 
\begin{tabular}{llcccc}
\toprule
\multirow{2}{*}{\textbf{Decoding}} & \multicolumn{3}{c}{\textbf{Llama2-7B-chat}}\\
\cmidrule(lr){2-4} 
& \textbf{MAUVE} & \textbf{Diversity}  & \textbf{Coherence}  \\
\midrule
top-$p$  & 31.85  & 87.41 & 49.85 \\
adaptive  & \textbf{37.06$^*$} & \textbf{93.99$^*$} & \textbf{54.39$^*$} \\
\bottomrule
\end{tabular}
\end{table}

Besides, we demonstrate the usefulness of our method on real-world test sets such as the MT-Bench benchmark \cite{zheng2024judging}. We utilize GPT-4 \cite{openai2023gpt4} to assess the generations produced by top-$p$ and adaptive decoding methods, maintaining the same evaluation settings as \citet{zheng2024judging}. This approach demonstrates a notable agreement rate exceeding 80\% between human judges and GPT-4 evaluation. Regarding MT-bench experiments for LLama2-7B-chat testing, the results are summarized in \cref{ana3}:

\begin{table}[htp]
\caption{Evaluating text generation on the MT-Bench dataset using GPT-4 (score: $0\sim10$). $p$ is set to $0.95$ while the threshold for adaptive decoding is set to $0.0005$.}
\label{ana3}
\centering
\small 
\setlength\tabcolsep{6pt} 
\begin{tabular}{llcccc}
\toprule
\multirow{2}{*}{\textbf{Decoding}} & \multicolumn{3}{c}{\textbf{Llama2-7B-chat}}\\
\cmidrule(lr){2-4} 
& \textbf{turn-1} & \textbf{turn-2}  & \textbf{mean}  \\
\midrule
top-$p$  & 6.7688  & 5.7625 & 6.2656 \\
adaptive  & \textbf{6.9000} & \textbf{5.9750} & \textbf{6.4375} \\
\bottomrule
\end{tabular}
\end{table}

\subsection{Reasoning Tasks}
\label{reasoning}
We utilize Llama2-7B-chat and Llama2-13B-chat models to tackle mathematical reasoning tasks in GSM8K \cite{cobbe2021training} and MultiArith. We randomly selected 300 samples from each dataset and instructed the models with the same instruction shown in \cref{Instruction}. The results were checked by humans.

\begin{table}[htp]
\caption{Zero-shot evaluation. $p$ is set to $0.95$ while threshold for adaptive decoding is set to $0.0005$. All the models use their chat version.}
\label{ana4}
\centering
\small 
\setlength\tabcolsep{4pt} 
\begin{tabular}{lccccc}
\toprule
\multirow{2}{*}{\textbf{Decoding}} & \multicolumn{3}{c}{\textbf{GSM8K}}\\
\cmidrule(lr){2-4} 
& \textbf{Llama2-7B} & \textbf{Llama2-13B} & \textbf{Llama2-70B} \\
\midrule
greedy & 29.33 & 39.33 & 56.33\\
top-$p$  & 29.00  &  37.00 & 56.00\\
adaptive  & \textbf{30.00} & \textbf{40.33} & \textbf{58.67}  \\
\midrule
\multirow{2}{*}{\textbf{Decoding}} & \multicolumn{3}{c}{\textbf{MultiArith}}\\
\cmidrule(lr){2-4} 
& \textbf{Llama2-7B} & \textbf{Llama2-13B}& \textbf{Llama2-70B}  \\
\midrule
greedy &  69.33 & 66.67 &  84.67\\
top-$p$  & 68.33  & \textbf{73.33}  & 85.33 \\
adaptive  & \textbf{69.33}  & 71.33 & \textbf{87.00} \\
\bottomrule
\end{tabular}
\end{table}

Initially, we observed that greedy decoding yielded satisfactory results, as these models were fine-tuned based on human instruction, establishing our baseline.  Subsequently, we expanded the candidate size to investigate whether the models exhibited sycophancy or encountered other issues with reasoning, potentially leading to decreased performance.  It was observed that increasing the candidate size improved performance for relatively smaller models, and the effect was the opposite for larger models.  Nonetheless, our adaptive decoding strategy maintained performance levels. 
\section{Analysis}
\label{analysis}
\subsection{Ablation Study}

\paragraph{Effect of threshold}

As outlined in \cref{dconf}, $\Delta$Conf strongly correlates with the current probability and the size of the allowed set. Consequently, adjusting various thresholds empowers us to manage both of them. The results in \cref{ana11} show that the hyperparameter $\epsilon$ is readily controllable. As $\epsilon$ increases, the allowed set size $k$ diminishes, maintaining a relatively constant cumulative probability within the set while exhibiting an increase in standard deviation. Concurrently, there is an observable rise in model confidence. This trend suggests heightened confidence levels may lead to degradation issues within a constrained allowable set.

\begin{table}[t]
\caption{Effect of threshold and sampling ways. Results are obtained from WritingPrompts.}
\label{ana11}
\centering
\small 
\setlength\tabcolsep{3pt} 
\begin{tabular}{ccccccccccc}
\toprule
\multirow{2}{*}{\textbf{LM}}& \multirow{2}{*}{$\epsilon$} & \multicolumn{3}{c}{\textbf{Random}}\\
\cmidrule(lr){3-5} 
& & $\mathbf{k}$ & \textbf{cumulative prob.} & \textbf{Conf} \\
\midrule
\multirow{4}{*}{\rotatebox{90}{GPT2-XL}}
& 0.001 & 30.34 $\pm$ 13.23 & 77.34 $\pm$ 14.67 & 64.89 $\pm$ 14.13  \\
& 0.005 & 8.13 $\pm$ 4.17 & 73.20 $\pm$ 18.40 & 71.82 $\pm$ 14.12 \\
& 0.01& 4.31 $\pm$ 2.58 & 70.44 $\pm$ 21.36 & 75.30 $\pm$ 14.83\\
& 0.02 & 1.80 $\pm$ 1.27 & 75.28 $\pm$ 26.22 & 84.06 $\pm$ 16.30 \\
\midrule
\multirow{4}{*}{\rotatebox{90}{Llama2}} 
& 0.001 & 31.81 $\pm$ 15.23 & 79.22 $\pm$ 13.84 & 65.08 $\pm$ 14.91 \\
& 0.005 & 8.14 $\pm$ 4.67 & 77.65 $\pm$ 17.07 & 74.41 $\pm$ 14.41  \\
& 0.01  & 4.51 $\pm$ 2.70 & 73.90 $\pm$ 19.78 & 77.00 $\pm$ 14.05\\
& 0.02 & 2.13 $\pm$ 1.47 & 75.51 $\pm$ 23.15 & 83.89 $\pm$ 14.21 \\
\toprule
\multirow{2}{*}{\textbf{LM}}& \multirow{2}{*}{$\epsilon$} & \multicolumn{3}{c}{\textbf{MultiNomial}}\\
\cmidrule(lr){3-5} 
& & $\mathbf{k}$ & \textbf{cumulative prob.} & \textbf{Conf} \\ 
\midrule
\multirow{4}{*}{\rotatebox{90}{GPT2-XL}}
& 0.001 & 30.40 $\pm$ 12.99 & 77.50 $\pm$ 14.59 & 64.89 $\pm$ 13.92 \\
& 0.005 & 8.21 $\pm$ 4.17 & 73.20 $\pm$ 18.35 & 71.77 $\pm$ 14.03 \\
& 0.01 & 4.25 $\pm$ 2.57 & 70.17 $\pm$ 21.75 & 75.20 $\pm$ 15.12 \\
& 0.02 & 1.73 $\pm$ 1.23 & 76.93 $\pm$ 25.70 & 85.15 $\pm$ 15.98 \\
\midrule
\multirow{4}{*}{\rotatebox{90}{Llama2}} 
& 0.001 & 21.73  $\pm$ 15.79 & 89.09 $\pm$ 11.00 & 77.17 $\pm$ 15.12 \\
& 0.005  & 6.47 $\pm$ 4.56 & 83.26 $\pm$ 16.32 & 80.44 $\pm$ 14.53\\
& 0.01   & 3.39 $\pm$ 2.61 & 81.39 $\pm$ 19.45 & 83.76 $\pm$ 14.61 \\
& 0.02  &  1.67 $\pm$ 1.27 & 83.00 $\pm$ 22.63 & 89.00 $\pm$ 14.05\\
\bottomrule
\end{tabular}
\end{table}


\paragraph{Effect of sampling ways}
Beyond the significance of truncation space, the method employed for sampling within this space plays a crucial role. In our investigation, we compare random sampling with multinomial sampling. The latter method preserves the probability distribution of the allowed set and samples tokens according to that distribution. As illustrated in \cref{ana11}, when considering the GPT2-XL model, different sampling methods exhibit only marginal impacts on the results. This can be attributed to the observation that, for the smaller LM, the distribution within the top-$k$ space tends to flatten, as depicted in \cref{fig:long_tailed}.

However, in the case of the Llama2-7B model, an interesting observation emerges: as $\epsilon$ increases, the multinomial method enhances model confidence and converges on a restricted allowed set. This phenomenon may arise from multinomial sampling being driven by the probability distribution, leading the model to favor tokens with the highest probabilities. Consequently, this contributes to degradation issues for models trained with MLE. Conversely, based on the outcomes of random sampling, including a few low-probability tokens appears to mitigate this degradation phenomenon. This suggests we need to lower $\epsilon$ when using multinomial sampling.

\begin{figure*}
    \centering
    \includegraphics[width=17cm]{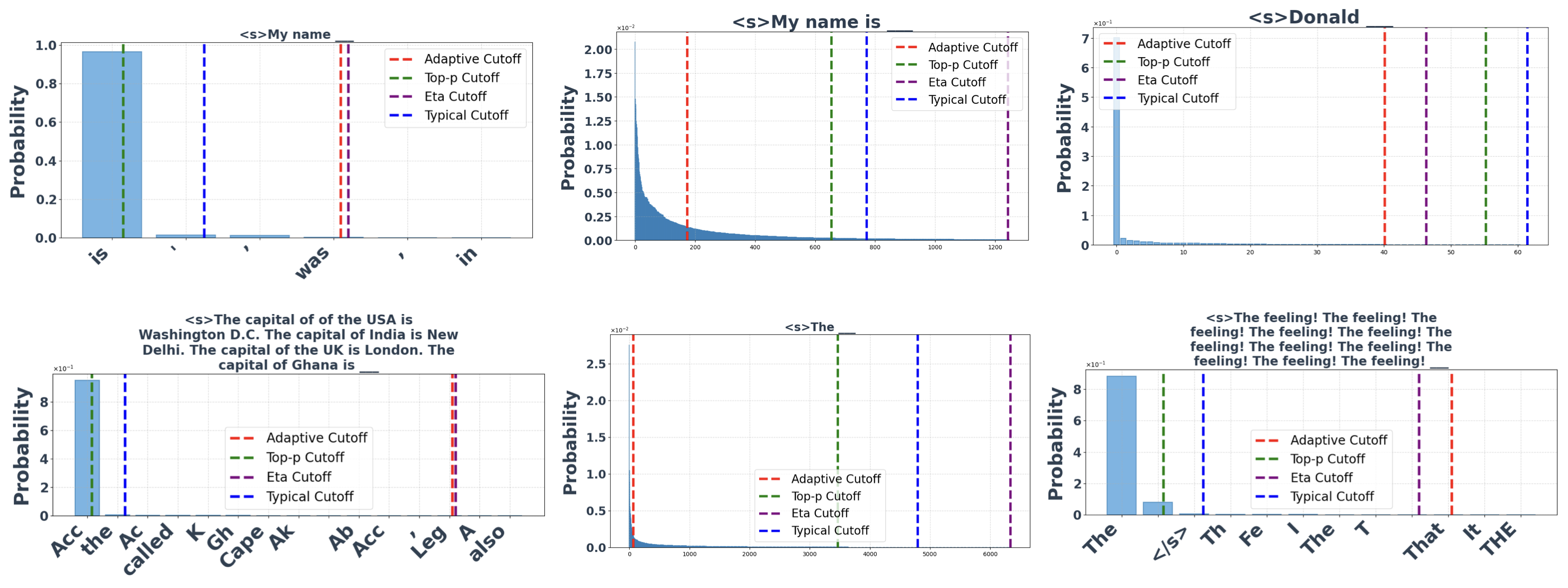}
    \caption{Unit tests of the truncation behavior of top-$p$, typical, $\eta$-sampling and adaptive decoding on CheckList-inspired prefixes. $p=0.90$, typical$=0.90$, $\eta=\sqrt{0.0009}=0.03$, and threshold for adaptive decoding $\epsilon$ is $0.0005$.}
    \label{fig:enter-label}
\end{figure*}

\subsection{Deocding Latency}

We employed these decoding algorithms to generate text based on 100 distinct prefixes. Subsequently, we computed the average token decoding time (after deriving logit from the model). Our implementation relies on the top-$p$ algorithm. As observed from \cref{ana22}, there is a clear trend of decreasing average token decoding time with a reduced vocabulary size. 

\begin{table}[htp]
\caption{Decoding latency. The computational hardware is NVIDIA RTX 3090, with the model loaded in float16. The unit of measurement is milliseconds (ms). } 
\label{ana22}
\centering
\small 
\setlength\tabcolsep{5pt} 
\begin{tabular}{llcccc}
\toprule
\textbf{Decoding} & \textbf{GPT2-XL} & \textbf{Llama2-7B}\\
\midrule
greedy &  0.0221 & 0.0265 \\
top-$k$ & 0.3116 & 0.3466 \\
top-$p$  & 0.4085   & 0.3706  \\
adaptive  &  0.5741 & 0.5265 \\
\bottomrule
\end{tabular}
\end{table}

\subsection{Qualitative Analysis}
We compare the truncation behaviors of different decoding algorithms through examples extracted from \citet{ribeiro-etal-2020-beyond}.

In our analysis of the top-$p$ algorithm, we observed that both top-$p$ and typical decoding exhibit similar truncation behaviors.  They effectively truncate low-entropy distributions but are less suitable for high-entropy distributions.  Adaptive decoding shows behavior similar to $\eta$-sampling for low-entropy distributions, offering enhanced diversity compared to the previous two methods.  However, our method uniquely establishes a bounded candidate set for high-entropy distributions, setting it apart from the other three approaches.
\section{Related Work}

\paragraph{Deterministic algorithms}
Establishing a reasonable size for the next candidate token set seems challenging. Greedy decoding directly uses the highest probability as the next token. Beam search maintains multiple probable tokens as the candidates of the next tokens. However, they often lead to high repetition. Contrastive search \cite{su2022contrastive} predefines the top-$k$ space and selects the discriminative token concerning the previous context. However, it is time-consuming and may select the irrational token. Penalty decoding aims to improve greedy decoding by adding penalties to tokens that have appeared before since the model tends to increase the probability of these tokens \cite{zhu-etal-2023-penalty}. Look-back decoding \cite{xu-etal-2023-look} adds randomness based on greedy decoding, which uses KL divergence to track the distribution distance between the current and historical decoding steps and limit the next token probability distribution to a reasonable distance from the history.

\paragraph{Stochastic algorithms}
The most typical random decoding algorithms are top-$k$ \cite{fan2018hierarchical} and top-$p$ \cite{holtzman2019curious}algorithms. Some later works are proposed to improve the truncation space. Mirostat \cite{basu2020mirostat} also proposes an adaptive decoding algorithm. In contrast to our work, they aim to control the perplexity close to a target value. They first calculate one value assuming words follow Zipf's law and then design a function with this value to calculate the size of the allowed set. $\eta$-sampling \cite{hewitt2022truncation} proposes to truncate words below an entropy-dependent probability threshold. Typical decoding \cite{meister2023locally} measures the distance between negative log probabilities and conditional entropy and then takes words from this list to cover $p$ percent of the distribution. Recently, \citet{finlayson2024closing} proposed basis-aware-threshold (BAT) sampling that uses the softmax matrix to find tokens that might have non-zero true probability without relying on a threshold.

\section{Conclusion}
In conclusion, our research introduces a novel decoding algorithm named \textit{adaptive decoding} for open-ended text generation, leveraging the thought of entropy reduction to dynamically determine the size of a suitable candidate set. We demonstrated its effectiveness in balancing coherence and diversity through experiments, outperforming existing decoding methods across different language model scales.

\section*{Impact Statement}
This paper presents work that aims to advance the field of natural language generation. There are many potential societal consequences of our work, none of which we feel must be specifically highlighted here.




    

\section*{Acknowledgements}
This paper is supported by the National Natural Science Foundation of China (62176153) and the Shanghai Municipal Science and Technology Major Project (2021SHZDZX0102, as the MoE Key Lab of Artificial Intelligence, AI Institute, Shanghai Jiao Tong University).





\bibliography{example_paper}
\bibliographystyle{icml2024}

\newpage
\appendix
\onecolumn
\section{Limitations}
We acknowledge that implementing the adaptive decoding algorithm introduces additional overhead. We hope that future work can address and eliminate this overhead. While we have theoretically demonstrated the rationale behind our method, further experimental validation through more extensive reasoning tasks is needed to confirm its effectiveness. Additionally, we anticipate the development of improved metrics for evaluating the results of model generation. Moreover, our verification of the decoding algorithm has been limited to the language models only and has not yet been extended to other areas like the multimodal domain.

\section{Entropy Change}
\label{entropy}
It is evident from \cref{fig: entropy} that the entropy of the model experiences notable fluctuations during text generation. Furthermore, distinct word segmentation among various models yields varying upper bounds for text generation entropy in each model. The amplitude of entropy changes between models is also influenced by the quality of model pre-training. Therefore, normalizing entropy becomes important to mitigate the impact of dimensional effects.

\begin{figure*}[htp]
  \centering

  \subfigure[The value of entropy at each generation step of the GPT2-XL model guided by human-written text. ]{
    \includegraphics[width=7.4cm]{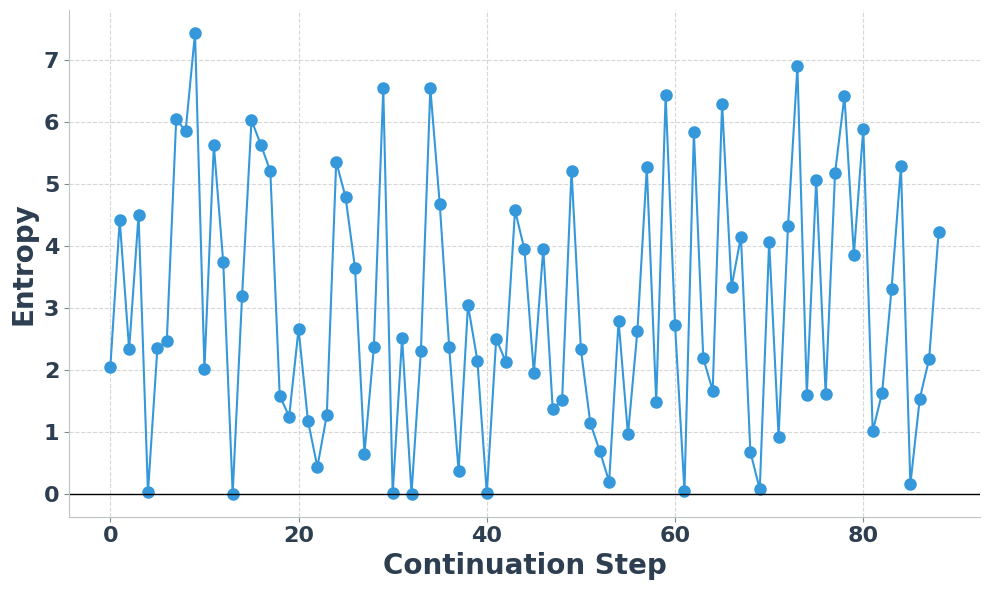}
    
  }
  \subfigure[The value of entropy at each generation step of the Llama2-7B model guided by human-written text. ]{
    \includegraphics[width=7.4cm]{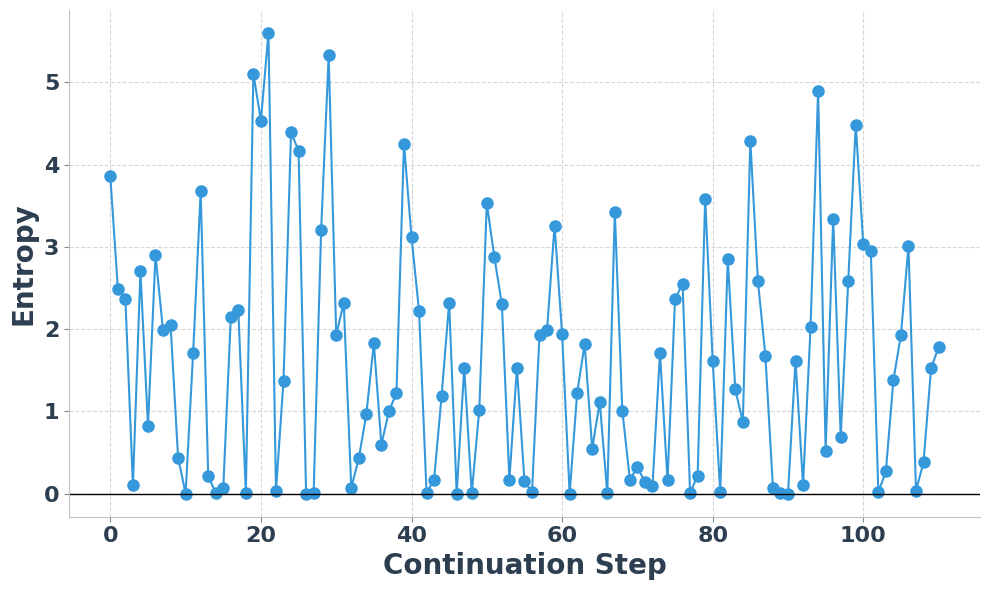}
    
  }
  
  \caption{Human written text: ``Barack Obama was born in 1961. He was raised in Hawaii by his mother and grandparents. Growing up, Obama faced the challenges of being biracial, with a Kenyan father and an American mother. Despite these challenges, he excelled academically and eventually attended Columbia University in New York City.'' 
  }
  \label{fig: entropy}
\end{figure*}

\section{Proof}
\label{proof}
\subsection{Proof of \cref{increase}}

Examine \cref{eq: Deltaconf} closely, and it becomes evident that $p_k\log p_k$ represents the average uncertainty carried into the known segment. On the other hand, the latter term $(1 - \sum_i^k p_i) \log\frac{(1 - \sum_i^k p_i)}{|\mathcal{V}| - k} - (1 - \sum_i^{k-1} p_i) \log\frac{(1 - \sum_i^{k-1} p_i)}{|\mathcal{V}| - k + 1}$ reflects the certainty introduced into the unknown portion. It can be shown that the increase in certainty surpasses the introduction of uncertainty as the known part expands.

\begin{corollary}
\label{eq:cor}
When sorting the state distribution $X$ predicted by the language model in descending order after applying the softmax function, it follows that $p_k \ge \frac{1- \sum_{i}^{k-1}p_i}{|\mathcal{V}|-k + 1}$.
\end{corollary}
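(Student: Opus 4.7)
The plan is to reinterpret the claimed inequality as the statement that the maximum of a set dominates its arithmetic mean. First I would rewrite the numerator on the right-hand side using the fact that the softmax outputs sum to $1$:
\begin{equation*}
1 - \sum_{i=1}^{k-1} p_i \;=\; \sum_{i=k}^{|\mathcal{V}|} p_i.
\end{equation*}
This exhibits the quantity $\tfrac{1 - \sum_{i=1}^{k-1} p_i}{|\mathcal{V}| - k + 1}$ as precisely the arithmetic mean of the $|\mathcal{V}| - k + 1$ tail probabilities $p_k, p_{k+1}, \ldots, p_{|\mathcal{V}|}$.

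Next, I would invoke the descending-ordering hypothesis: since $p_k \ge p_j$ for every index $j \in \{k, k+1, \ldots, |\mathcal{V}|\}$, summing these $|\mathcal{V}| - k + 1$ inequalities gives
\begin{equation*}
(|\mathcal{V}| - k + 1)\, p_k \;\ge\; \sum_{j=k}^{|\mathcal{V}|} p_j \;=\; 1 - \sum_{i=1}^{k-1} p_i.
\end{equation*}
Dividing both sides by the positive integer $|\mathcal{V}| - k + 1$ then yields the desired bound.

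There is really no main obstacle here; the argument is a one-line observation that a maximum dominates an average, and the descending-order assumption is doing all the work. The only subtle point I would flag in passing is the tightness condition: equality holds if and only if $p_k = p_{k+1} = \cdots = p_{|\mathcal{V}|}$, i.e., when the tail of the sorted distribution is uniform, which parallels the tightness criterion of \cref{thm: upper}. This nonstrict inequality is exactly what is needed downstream to show that the ``certainty introduced into the unknown portion'' in \cref{eq: Deltaconf} outweighs the ``uncertainty carried into the known segment,'' making $\Delta \text{Conf}$ nonnegative and thereby justifying the monotonic-increase claim of \cref{increase}.
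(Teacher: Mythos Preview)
Your proof is correct and rests on the same observation as the paper's: $p_k$ is the maximum of the tail $\{p_k,\ldots,p_{|\mathcal{V}|}\}$ and therefore dominates its average $\frac{1-\sum_{i=1}^{k-1}p_i}{|\mathcal{V}|-k+1}$. The only stylistic difference is that you argue directly by summing the inequalities $p_k\ge p_j$, whereas the paper argues by contradiction (assuming $p_k$ is strictly below the tail average forces some later $p_j>p_k$, violating the ordering); your direct route is slightly cleaner and also handles the nonstrict case without fuss.
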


\begin{proof}
We acknowledge that $p_1 >  p_2 >  ...  >  p_{|\mathcal{V}|}$ with $\sum_i^{|\mathcal{V}|} p_i = 1$.  If, however, $p_k <  \frac{1- \sum_{i}^{k-1}p_i}{|\mathcal{V}|-k + 1}$, it implies the existence of $j$ within the range $[k+1, |\mathcal{V}|]$ such that $p_j >  p_k$.  This assertion results in a contradiction.
\end{proof}

\begin{corollary}
\label{eq:ineq1}
Given that the sum of probabilities for all tokens in the vocabulary is expressed as $\sum_{i}^{|\mathcal{V}|} p_i = 1$, we can deduce that $ 1 - \sum_{i}^{k-1}p_i -p_k = \sum_{k + 1}^{|\mathcal{V}|} p_i \ge 0$. Observing this inequality, it is evident that $1 - \sum_{i}^{k-1}p_i \ge p_k$.
\end{corollary}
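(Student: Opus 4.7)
The plan is to derive the stated inequality as a direct consequence of the normalization of $p$ together with nonnegativity of probabilities; no use of the descending order from Corollary \ref{eq:cor} is required. First I would invoke the definition of a probability distribution over $\mathcal{V}$, giving $\sum_{i=1}^{|\mathcal{V}|} p_i = 1$ and $p_i \ge 0$ for every index $i$.

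The key manipulation is simply to partition this sum into three disjoint blocks around the index $k$: the prefix $\sum_{i=1}^{k-1} p_i$, the singleton term $p_k$, and the tail $\sum_{i=k+1}^{|\mathcal{V}|} p_i$. Moving the first two blocks to the left-hand side of the normalization identity produces
\begin{equation*}
1 - \sum_{i=1}^{k-1} p_i - p_k \;=\; \sum_{i=k+1}^{|\mathcal{V}|} p_i,
\end{equation*}
which is precisely the equality asserted in the corollary. Because the right-hand side is a (possibly empty) sum of nonnegative numbers, it is $\ge 0$, and adding $p_k$ to both sides of that inequality immediately yields $1 - \sum_{i=1}^{k-1} p_i \ge p_k$.

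There is essentially no obstacle: the only thing to be careful about is the boundary cases. For $k = 1$ the prefix sum is empty and the claim reduces to $1 \ge p_1$; for $k = |\mathcal{V}|$ the tail sum is empty and the inequality becomes an equality. Both are handled uniformly by the standard convention that an empty sum equals zero, so the argument above covers every admissible $k$.
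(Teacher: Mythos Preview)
Your proposal is correct and matches the paper's reasoning exactly: the paper does not give a separate proof but embeds the argument in the statement itself, deriving the equality by splitting $\sum_i p_i = 1$ at index $k$ and noting the tail $\sum_{k+1}^{|\mathcal{V}|} p_i$ is nonnegative. Your additional remark on the boundary cases $k=1$ and $k=|\mathcal{V}|$ is a nice touch but not present in the paper.
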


\begin{corollary}
\label{eq: increasing}
The function Conf$_k(X)$ is monotonically increasing, as it is always guaranteed that $\Delta$Conf $\ge 0$.
\end{corollary}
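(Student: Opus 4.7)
The plan is to reduce the claim $\Delta\text{Conf}\ge 0$ to a single application of the log-sum inequality. Since the prefactor $1/\log|\mathcal{V}|$ in \cref{eq: Deltaconf} is strictly positive whenever $|\mathcal{V}|\ge 2$, it is enough to show that the bracketed expression is nonnegative.

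My first step is to introduce the shorthand $S_j := 1 - \sum_{i=1}^{j} p_i$ for the residual tail mass after the top-$j$ tokens, together with $n := |\mathcal{V}| - k$. Then $S_{k-1} = p_k + S_k$ and $|\mathcal{V}|-k+1 = 1 + n$, so the quantity to be shown nonnegative collapses to
\begin{equation*}
p_k \log p_k + S_k \log\frac{S_k}{n} \;\ge\; (p_k + S_k)\log\frac{p_k+S_k}{1+n}.
\end{equation*}

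My second step is to invoke the log-sum inequality: for nonnegative $a_1,a_2$ and positive $b_1,b_2$, one has $a_1\log(a_1/b_1) + a_2\log(a_2/b_2) \ge (a_1+a_2)\log((a_1+a_2)/(b_1+b_2))$, with the convention $0\log 0 = 0$. Setting $(a_1,b_1) = (p_k, 1)$ and $(a_2,b_2) = (S_k, n)$ yields exactly the displayed inequality, closing the argument.

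The main obstacle is really just bookkeeping: spotting that the three terms appearing in $\Delta\text{Conf}$ all have the shape $a\log(a/b)$ with denominators that add compatibly, so that log-sum applies verbatim. The auxiliary ordering results \cref{eq:cor} and \cref{eq:ineq1} are not logically required for this route, but they certify that the quantities involved are nonnegative and supply the intuitive picture: promoting $p_k$ out of the uniformly-treated tail into the known region moves mass from a flatter distribution to a sharper one, which can only raise confidence. Equality $\Delta\text{Conf}=0$ holds precisely when $p_k = S_k/n$, i.e.\ when the residual tail is already uniform at the level of $p_k$.
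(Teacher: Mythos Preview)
Your proof is correct and takes a genuinely different route from the paper. The paper proceeds by a two-step chain of inequalities: it first uses \cref{eq:cor} to replace $-S_{k-1}\log\tfrac{S_{k-1}}{n+1}$ by the larger quantity $S_{k-1}\log\tfrac{1}{p_k}$, and then invokes \cref{eq:ineq1} to shrink the prefactor $S_{k-1}$ down to $p_k$ so that everything telescopes to $p_k\log 1 = 0$. You instead recognize that the three terms in $\Delta\text{Conf}$ assemble into a two-summand instance of the log-sum inequality with $(a_1,b_1)=(p_k,1)$ and $(a_2,b_2)=(S_k,n)$, giving the result in one stroke. This buys you several things: the sorted-probability hypotheses of \cref{eq:cor} and \cref{eq:ineq1} are not needed at all, so in fact you establish monotonicity under transferring \emph{any} token from the tail, not just the largest remaining one; the equality case $p_k=S_k/n$ falls out for free from the log-sum equality condition; and you sidestep the delicate sign bookkeeping that the paper's second step faces when it reduces a coefficient sitting in front of two logarithms of opposite sign.
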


\subsection{Proof of \cref{approximation}}

\begin{proof}
It is intuitively correct that $\mathrm{H}(X) = -\sum_i^{\mathcal{V}}p_i \log p_i$ converges to $-\sum_i^k p_i \log p_i$ under the condition $P_\theta(X>k + m|X>k) \rightarrow 1$. In accordance with \cref{eq:cor}, the following inequality holds:
$$
\begin{aligned}
    &\log|\mathcal{V}|\Delta \text{Conf} \ge p_k\log \frac{p_k (|\mathcal{V}| - k)}{ 1- \sum_i^{k} p_i}\\& +(1 - \sum_i^{k-1} p_i)\log\frac{1}{p_k}    + (1 - \sum_i^{k-1} p_i) \log\frac{(1 - \sum_i^{k} p_i)}{|\mathcal{V}| - k}.
\end{aligned}
$$
Moreover, based on \cref{eq:ineq1}, we can further derive:
$$
\begin{aligned}
&\log|\mathcal{V}|\Delta \text{Conf} \ge p_k\log \frac{p_k (|\mathcal{V}| - k)}{ 1- \sum_i^{k} p_i} \\ & + p_k \log\frac{(1 - \sum_i^{k} p_i)}{|\mathcal{V}| - k}  +  p_k\log\frac{1}{p_k}  \ge p_k \log 1 = 0.
\end{aligned}
$$
\end{proof}

\label{app: truncation}
Detailed mathematical proof in \cref{cor: truncation} is as follows.

Consider $$H(X)= -\sum_{i=1}^kP_\theta(x_i)\log P_\theta(x_i) - \sum_{k+1}^{|\mathcal{V}|}P_\theta(x_i)\log P_\theta(x_i).$$ 

As $P_\theta(X>k + m|X>k) \rightarrow 1$, 
\begin{equation}
\label{ddd}
    - \sum_{k+1}^{|\mathcal{V}|}P_\theta(x_i)\log P_\theta(x_i) \rightarrow 0.
\end{equation}

We define the binary random variable $$B(u)=\left\{\begin{array}{l}
1 \text { if } X \leq u \\
0 \text { if } X>u.
\end{array} \right.$$. 

Then, we can express $H(X)$ in terms of $B(u)$ as:
$$
\begin{aligned}
H(X) & = H(X \mid B(u))+I(X ; B(u)) \\
& =H(X \mid B(u)=1) \operatorname{Pr}\{B(u)=1\}+H(X \mid B(u)=0)  \times \operatorname{Pr}\{B(u)=0\}+I(X ; B(u)) \\
& =H(X(u)) \operatorname{Pr}\{B(u)=1\}+H(X \mid B(u)=0) \times \operatorname{Pr}\{B(u)=0\}+I(X ; B(u)) .
\end{aligned}
$$

As $u \rightarrow k$, we have $\operatorname{Pr}(B(u) = 1) \rightarrow 1$, leading to $H(B(u)) \rightarrow 0$. Consequently, $I(X;B(u)) \rightarrow 0$ due to $I(X;B(u)) \le H(B(u))$.

Additionally, $H(X|B(u)=0)\operatorname{Pr}(B(u)=0)$ can be simplified as:

$$
\begin{aligned}
H( & X \mid B(u)=0) \operatorname{Pr}\{B(u)=0\}\\
& =  -\sum_{i=k+1}^{\infty} P_\theta(x_i) \log \frac{P_\theta(x_i)}{\operatorname{Pr}\{B(u)=0\}} \\
= & -\sum_{i=k+1}^{\infty} P_\theta(x_i)\log P_\theta(x_i) -\log \operatorname{Pr}\{B(u)=0\}) \\
= & -\sum_{i=k+1}^{\infty}P_\theta(x_i) \log P_\theta(x_i) +\left(\sum_{i=k+1}^{\infty} P_\theta(x_i)\right) \log \operatorname{Pr}\{B(u)=0\} \\
= & -\sum_{i=k+1}^{\infty} P_\theta(x_i) \log P_\theta(x_i) +\operatorname{Pr}\{B(u)=0\} \log \operatorname{Pr}\{B(u)=0\} .
\end{aligned}
$$

As $u \rightarrow k$, we have $\operatorname{Pr}(B(u)=0) \rightarrow 0$ and \cref{ddd}, then both terms lead to zero. In conclusion, we obtain $H(X)  = H(X(k)) + \epsilon$, where $\epsilon$ is a smaller number.

\section{Hyperparameter}
\label{hyperparameter}

Choosing hyperparameters seems a challenging task, and the process of our parameter selection is illustrated in \cref{hyper-gpt} and \cref{hyper-llama}. Here are some interesting things we found:
\begin{itemize}
    \item The top-$k$ and top-$p$ algorithms enhance diversity by expanding the truncation space, leading to higher MAUVE values. This occurrence can be attributed to the ability of LMs to sample certain low-probability tokens, which are proximate to the probabilities preferred by humans. This proximity results in a closer match between the two curves illustrated in \cref{fig:generation_1}. However, it is crucial to recognize that the concurrent increase in diversity corresponds to a decrease in coherence. 
    \item Mirostat excels in effectively managing text coherence by controlling model perplexity.   The ease of controlling parameters further enhances its utility, even though some of the observed increase in coherence may be attributed to the rise in indices resulting from repetition. Additionally, our investigation reveals that Mirostat, when applied to the Llama2-7B model, maintains high coherence and ensures a certain level of diversity.
\end{itemize}

\begin{table*}[t]
\caption{Optimal hyperparameter selection for various decoding algorithms in the GPT2-XL model. The line where the bold is located designates the optimal parameter value. We repeat the sampling process five times for each prefix and report the average score.}
\label{hyper-gpt}
\centering
\small 
\setlength\tabcolsep{3pt} 
\begin{tabular}{lcccccccccccc}
\toprule
\multirow{2}{*}{\textbf{Decoding}}& \multicolumn{6}{c}{\textbf{WritingPrompts}} & \multicolumn{6}{c}{\textbf{WikiText-103}}  \\
\cmidrule(lr){2-7} \cmidrule(lr){8-13}
&\textbf{rep-2} & \textbf{rep-3} & \textbf{rep-4} & \textbf{Diversity} & \textbf{MAUVE} & \textbf{Coherence} &\textbf{rep-2} & \textbf{rep-3} & \textbf{rep-4} & \textbf{Diversity} & \textbf{MAUVE} & \textbf{Coherence} \\
\midrule
Human& 7.70  & 1.82 & 0.79  & 90.06  & -- & 29.57  & 5.14 & 1.02 & 0.33 & 93.66 & -- & 63.49\\
\midrule
$k=5$  & 32.86 & 24.03 & 19.55 & 49.37 & 23.94 & 53.16  & 19.18 & 10.37 & 6.90 & 70.40 & 82.29 & 62.57\\
$k=10$  &19.60 & 11.83 & 8.71 & 69.82 & 33.65 & 50.04  & 11.50 & 4.44 & 2.44 & 83.41 & 82.49 & 62.25\\
$k=15$ & 14.91 & 8.35 & 6.16 & 76.86 & 44.92 & 49.51 & 9.90 & 3.64 & 2.01 & 86.24 & \textbf{88.75} & 61.41\\
$k=20$ & 12.23 & 6.26 & 4.35 & 81.22 & \textbf{48.25} & 47.87 & 8.52 & 2.78 & 1.48 & 88.51 & 88.67 & 61.61\\
\midrule
$p=0.80$ & 25.39 & 18.79 & 15.90 & 61.61 & 30.29 & 49.13  & 12.98 & 5.94 & 3.84 & 80.89 & 77.74 & 61.11\\
$p=0.85$ & 19.78 & 13.14 & 10.80 & 70.02 & 35.10 & 48.20  & 11.23 & 4.87 & 3.02 & 83.73 & 79.61 & 61.46 \\
$p=0.90$ & 15.16 & 9.14 & 7.07 & 76.96 & 38.52 & 47.15 & 9.57 & 3.67 & 2.17 & 86.62 & 84.46 & 61.13 \\
$p=0.95$ & 12.21 & 7.04 & 5.53 & 81.43 & \textbf{51.69} & 47.35  & 7.70 & 2.70 & 1.49 & 89.24 & \textbf{88.50} & 60.87\\
\midrule
mirostat$-2.0$ &  59.05 & 52.33& 47.75 &17.26 & 13.75& 50.21  & 45.09 & 35.99 & 30.57 & 31.43 & \textbf{56.59} & 60.34 \\
mirostat$-3.0$  & 58.73 & 51.91 & 47.27 & 17.82 & 16.52 & 49.63 & 45.13 & 36.09 & 30.61 & 31.27 & 55.29 & 60.73\\
mirostat$-4.0$  & 58.62  &51.84 & 47.17& 17.69 & 16.30 & 49.28 & 45.14 & 36.07 & 30.60 & 31.11 & 56.56 & 61.01  \\
mirostat$-5.0$  &58.55 & 51.70 & 47.04 & 18.17 & \textbf{16.91} & 49.71 & 45.46 & 36.46 & 31.03 & 31.11 & 53.55 & 61.09 \\
\midrule
$\tau=0.90$ & 15.24 & 9.24 & 7.25 & 76.74 & 38.10 & 46.64  & 9.38 & 3.60 & 2.14 & 86.94 & 85.35 & 61.27 \\
$\tau=0.92$ & 14.02 & 8.09 & 6.27 & 78.65 & \textbf{43.92} & 46.63 & 9.08 & 3.35 & 2.06 & 87.47 & 84.99 & 61.00  \\
$\tau=0.95$ & 12.05 & 6.96 & 5.32 & 81.47 & 38.01 & 46.99 & 8.02 & 2.73 & 1.52 & 89.10 & 85.39 & 60.89 \\
$\tau=0.99$ & 9.22 & 4.63 & 3.64 & 85.42 & 40.57 & 46.59  & 6.74 & 1.93 & 0.95 & 91.13 & \textbf{90.07} & 59.93\\
\midrule
$\eta=0.004$ & 12.33 & 7.08 & 5.66 & 80.95 & 41.86 & 46.69  & 7.83 & 2.63 & 1.48 & 89.22 & \textbf{90.40} & 61.10  \\
$\eta=0.002$ &9.61 & 4.98 & 3.63 & 85.42 & 47.24 & 45.60 & 7.58 & 2.75 & 1.80 & 89.81 & 82.18 & 59.55\\
$\eta=0.0009$ & 9.10 & 4.58 & 3.41 & 86.58 & \textbf{59.34} & 45.67 & 6.45 & 1.75 & 0.98 & 91.38 & 87.17 & 60.33  \\
$\eta=0.0006$ & 8.99 & 4.38 & 3.67 & 85.98 & 46.23 & 44.38 & 6.44 & 1.72 & 0.82 & 91.64 & 86.45 & 60.37 \\
\midrule
$\epsilon=0.0005$ & 5.46 & 3.16 & 2.75 & 91.38 & 47.19 & 36.38 & 4.82 & 1.55 & 1.00 & 93.81 & 83.72 & 56.58\\
$\epsilon=0.001$ & 7.07 & 3.85 & 3.11 &  89.32 & \textbf{55.19} & 37.72 &  5.22 & 1.51 & 0.93  & 93.00 & \textbf{84.54} & 56.39\\
$\epsilon=0.005$ & 40.69 & 33.57 & 29.95 & 39.80 & 11.02 & 47.51 &  21.10 & 12.50 & 9.10 & 67.63 & 69.27 & 61.30  \\
$\epsilon=0.01$ & 71.14 & 67.75 & 65.28 & 9.81 & 2.89 & 50.71 & 45.17 & 37.82 & 33.96 & 34.02 & 23.08 & 61.04 \\
\bottomrule
\end{tabular}
\end{table*}

\begin{table*}[t]
\caption{Optimal hyperparameter selection for various decoding algorithms in the Llama2-7B model. The line where the bold is located designates the optimal parameter value. We repeat the sampling process five times for each prefix and report the average score.}
\label{hyper-llama}
\centering
\small 
\setlength\tabcolsep{4pt} 
\begin{tabular}{lcccccccccccc}
\toprule
\multirow{2}{*}{\textbf{Decoding}}& \multicolumn{6}{c}{\textbf{WritingPrompts}} & \multicolumn{6}{c}{\textbf{WikiText-103}}  \\
\cmidrule(lr){2-7} \cmidrule(lr){8-13}
&\textbf{rep-2} & \textbf{rep-3} & \textbf{rep-4} & \textbf{Diversity} & \textbf{MAUVE} & \textbf{Coherence} &\textbf{rep-2} & \textbf{rep-3} & \textbf{rep-4} & \textbf{Diversity} & \textbf{MAUVE} & \textbf{Coherence} \\
\midrule
Human& 7.70  & 1.82 & 0.79  & 90.06  & -- & 29.57  & 5.14 & 1.02 & 0.33 & 93.66 & -- & 63.49\\
\midrule
$k=5$ & 22.69 & 14.76 & 11.10 & 66.96 & 39.66 & 27.15 & 12.14 & 5.56 & 3.20 & 81.69 &  63.82 & 59.43\\
$k=10$ & 14.21 & 7.26 & 5.05 & 79.94 & 45.34 & 27.44 & 9.35 & 3.70 & 2.07 & 86.51 & 74.15 & 59.51\\
$k=15$  & 11.00 & 5.42 & 3.65 & 84.46 & 40.40 & 25.98 & 7.88 & 2.76 & 1.40 & 89.11 & 82.88 & 59.38 \\
$k=20$ & 9.62 & 4.19 & 2.94 & 86.29 & \textbf{47.36} & 26.25 & 7.44 & 2.55 & 1.40 & 89.54 & \textbf{87.07} & 59.27 \\
\midrule
$p=0.80$ & 17.85 & 10.92 & 8.23 & 74.17 & 46.40 & 27.23 & 11.22 & 5.05 & 2.94 & 83.40 & 75.67 & 59.33 \\
$p=0.85$ & 13.51 & 7.66 & 5.58 & 80.26 & 44.91 & 27.55 & 9.64 & 4.01 & 2.29 & 85.84 & 79.15 & 58.94\\
$p=0.90$ & 11.02 & 5.41 & 3.91 & 83.58 & 55.53 & 31.11 & 8.20 & 3.02 & 1.58 & 88.37 & 80.49 & 59.54\\
$p=0.95$ & 8.39 & 4.32 & 3.26 & 87.96 & \textbf{58.75} & 26.99 & 7.13 & 2.52 & 1.32 & 90.12 & \textbf{82.22} & 58.89\\
\midrule
mirostat$-2.0$ & 13.58 & 7.99 & 6.04 & 79.94 & 44.55 & 55.91 & 12.25 & 6.10 & 3.94 & 80.99 & 81.55 & 66.51 \\
mirostat$-3.0$  & 13.46 & 7.85 & 5.88 & 80.04 & \textbf{50.51} & 54.83 & 12.36 & 6.14 & 3.99 & 80.72 & \textbf{82.06} & 65.87 \\
mirostat$-4.0$ & 14.95 & 8.59 & 6.85 & 79.11 & 42.97 & 55.10 & 12.70 & 6.50 & 4.32 & 80.06 & 81.09 & 66.31   \\
mirostat$-5.0$  & 15.08 & 9.82 & 7.88 & 77.82 & 45.22 & 54.84 & 12.49 & 6.23 & 4.06 & 80.53 & 78.36 & 66.46 \\
\midrule
$\tau=0.90$ & 10.83 & 5.36 & 3.84 & 84.65 & 38.07 & 25.23 & 8.24 & 3.24 & 1.77 & 88.10 & 77.49 & 59.23\\
$\tau=0.92$ & 9.37 & 4.37 & 3.38 & 86.86 & 33.88 & 27.80 & 7.68 & 2.80 & 1.46 & 89.17 & 82.78 & 58.79 \\
$\tau=0.95$ & 9.54 & 5.03 & 3.78 & 87.09 & 40.18 & 26.80 & 6.81 & 2.20 & 1.06 & 90.61 & \textbf{87.75} & 58.83 \\
$\tau=0.99$ & 8.67 & 4.53 & 4.13 & 88.93 & \textbf{53.26} & 25.76 & 6.00 & 1.98 & 1.00 & 91.75 & 80.68 & 58.50\\
\midrule
$\eta=0.004$ & 7.86 & 3.42 & 2.50 & 88.69 & 48.93 & 26.30 & 7.15 & 2.47 & 1.30 & 90.04 & 77.47 & 58.76\\
$\eta=0.002$ & 7.17 & 3.58 & 2.62 & 89.76 & 48.34 & 26.88 & 6.41 & 2.21 & 1.17 & 91.15 & 81.49 & 58.76\\
$\eta=0.0009$ &  7.10 & 3.04 & 2.14 & 90.42 & 61.30 & 26.78 & 5.68 & 1.72 & 0.85 & 92.41 & 86.14 & 58.76 \\
$\eta=0.0006$ & 6.69 & 2.93 & 2.42 & 90.43 & \textbf{63.80} & 26.35 & 5.87 & 1.73 & 0.76 & 92.04 & \textbf{88.28} & 58.75\\
\midrule
$\epsilon=0.0005$ & 7.21 & 2.84 & 1.74 & 89.54 & \textbf{69.00} & 35.40 & 5.90 & 1.78 & 0.79 & 92.00 &  87.18 & 58.78 \\
$\epsilon=0.001$ & 8.38 & 3.54 & 2.14 & 87.76 & 59.34 & 35.92 & 6.74 & 2.16 & 1.03 & 90.65 & \textbf{87.81} & 59.26 \\
$\epsilon=0.005$ & 19.22 & 11.44 & 8.43 & 70.75 & 58.23 & 42.35 & 10.42 & 4.61 & 2.69 & 84.76 & 76.91 & 60.03\\
$\epsilon=0.01$ & 35.60 & 27.20 & 22.96 & 47.12 & 43.69 & 44.02 & 14.11 & 7.34 & 4.65 & 78.55 & 68.76 & 58.69\\
\bottomrule
\end{tabular}
\end{table*}

\section{Instrcution}
\label{Instruction}
In \cref{policy}, we use the following instruction, where the sentence is derived from the prompt in WritingPompts.
\begin{verbatim}
<s>[INST] <<SYS>>
You're a writer. You will write a story given my prefix. 
<</SYS>>

Prefix: {sentence} [/INST]
\end{verbatim}

In \cref{reasoning}, we use the following instruction, where the problem is derived from the dataset GSM8K and MultiArith.
\begin{verbatim}
<s>[INST] <<SYS>>You are a help assistant and a math expert. 
Please solve the following question and directly return me the answer.<</SYS>>
    Problem: {sentence} 
    Let's think step by step\n[/INST]
\end{verbatim}

\section{Human Evaluation}
\label{app: human-eval}
\begin{figure*}[htp]
    \centering
    \label{fig: human_eval}
    \includegraphics[width=15cm]{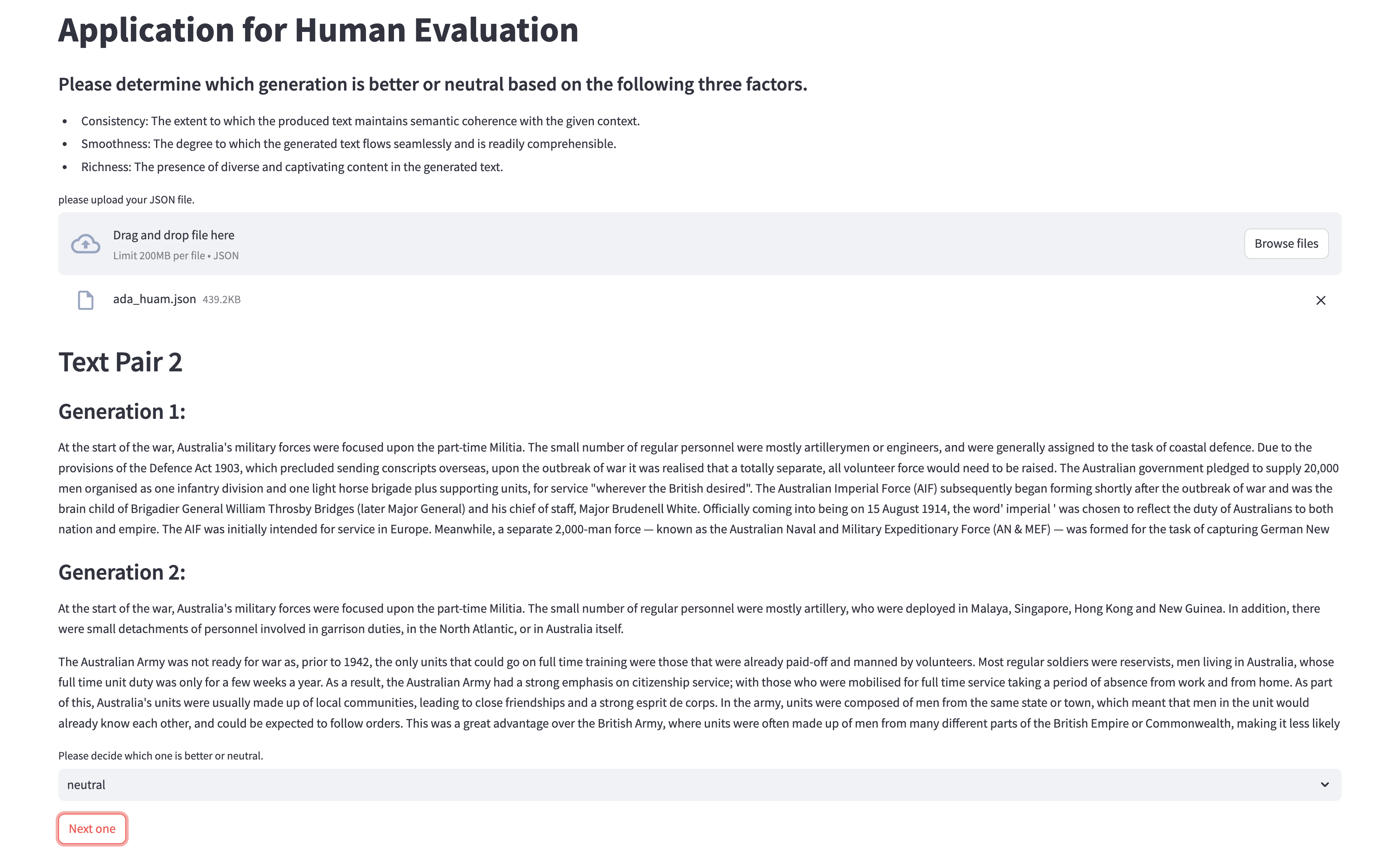}
    \caption{Our application for human evaluation. }
\end{figure*}
The human evaluation adheres to the principles of blind review, ensuring that the evaluator remains unaware of the decoding algorithms employed in the two generations or the human writing in advance. The evaluator determines the superior outcome based on three guiding principles. Our application for human evaluation is illustrated in \cref{fig: human_eval}.

\begin{itemize}
    \item Consistency: The extent to which the produced text maintains semantic coherence with the given context.
    \item Smoothness: The degree to which the generated text flows seamlessly and is readily comprehensible.
    \item Richness: The presence of diverse and captivating content in the generated text
\end{itemize}

\section{Case Study}
\label{case_study}
Given the prefix sampled from WikiText-103, which provides relevant enough context, we give the truncated human continuation and the generation of our proposed adaptive decoding. Cases can be found in \cref{case_gpt} and \cref{case_llama}.

\begin{table*}[t]
\caption{Case study: Text continuation with the GPT2-XL model. Entities are in blue fonts, logical relationships are in red, and coherent contents are highlighted in green.}
\label{case_gpt}
\vskip 0.15in
\begin{center}
\begin{small}
\begin{sc}
\begin{tabular}{p{16cm}}
\toprule
\textnormal{\textbf{\textit{Prefix}}: Following a period of rest in England, \textcolor{blue}{Little} turned down a desk assignment and volunteered to return to action on the Western Front, joining Lieutenant Colonel Collishaw's}\\\midrule
\textnormal{\textbf{\textit{Human Continuation}}:  No. 3 Squadron RNAS \textcolor{red}{in March 1918}. The unit evolved into No. 203 Squadron of the new Royal Air Force on 1 April, formed after the merger of the RNAS and the Royal Flying Corps. Now ranked captain, and again flying Sopwith Camels, \textcolor{blue}{Little} gained a further nine successes, beginning with a Fokker Triplane on 1 April, and concluding with two kills in one day \textcolor{red}{on 22 May}, an Albatros and a DFW. During this stretch of victories, \textcolor{red}{on 21 April 1918}, he was brought down unharmed by Friedrich Ehmann. \textcolor{red}{On 27 May}, \textcolor{blue}{Little} received reports of German Gotha bombers in the vicinity, and took off on a moonlit evening to intercept the raiders. As he closed with one of the bombers, his plane was caught in a searchlight beam and he was struck by a bullet that passed through both his thighs. He crash-landed in a field near Nœux, and bled to death before he was discovered the following morning by a passing gendarme. \textcolor{blue}{Little}'s skull and ankle had also been fractured in the impact; his body was identified by his friend and fellow ace, Charles Dawson Booker. 
}\\
\midrule
\textnormal{\textbf{\textit{Adaptive decoding}}:  49th Rifles. On his first day back, he led a cavalry charge which was repulsed in the village of Culmore Wood \textcolor{red}{in June 1916}, while advancing across an open plain on the company's way to Vimy Ridge. In \textcolor{red}{late 1916}, He had been promoted from sergeant and given a smaller objective position in the near-by village of Meadowhall. His company was soon attacked by a cavalry squad of such ferocity that the archers and machine gunners could have opened up at once. \textcolor{blue}{Little} was bitten, painfully, by a cavalry lance. He received the nicknames "Noire" and "The Snake", because of his long knife, thin face, and reptilian nature. He later hired a pet donkey to pull his belongings back to camp at the end of each day's march. \textcolor{red}{In late May 1917}, three weeks after the Heed affair, \textcolor{blue}{Little} was deployed for the first time in front of the Hindenburg Line when, as part of the Anglo-French alliance, his squad was to capture and destroy German flak guns in Belgium. As if on cue, two machine gun crews came through a broken field and attacked \textcolor{blue}{Little}'s men, who retreated through a canal. }\\
\midrule
\textnormal{\textbf{\textit{Prefix}}:  The total population size of the \textcolor{blue}{African river martin} is unknown. In the late 1980s, it appeared to be common, if local, and large numbers were
}\\\midrule
\textnormal{\textbf{\textit{Human Continuation}}:  seen on migration in Gabon. However, it is particularly poorly studied in the Democratic Republic of the Congo (DRC), and it is not known if there is any relationship between the birds breeding in the DRC and those breeding in coastal areas of Gabon and Congo. Several hundred birds were seen at Conkouati-Douli National Park in Congo \textcolor{red}{in 1996} and a flock of 15,000 birds was seen in Gabon \textcolor{red}{in 1997}. Investigations in Gabon \textcolor{red}{in September 2003} extended the known range of this martin. More than 300 birds were found with hundreds of rosy bee-eaters in the Omboué area and the newly established Iguéla National Park, and a mixed flock with rosy bee-eaters at Loango National Park in Gabon was estimated at 100,000 birds in total. \textcolor{red}{In January 2010}, 250 martins were seen at a new location in Bakoumba, and single birds were observed in hirundine flocks near Moanda, at Mounana, and at Lekoni. 
}\\
\midrule
\textnormal{\textbf{\textit{Adaptive decoding}}:  seen during spring breeding in the Cape province of South Africa. The species is also common in South America, but these differences between species and range regions is unclear. The \textcolor{blue}{African river martin} \textcolor{green!30!black}{is a robust and broad-bodied species that has a dark body and light face, which enhances its camouflage. Its chest is often painted or spotted with black. The coloration and markings can differ between individuals. They have very large ears and a long tail, and in female martins have a pattern that resembles a dorsal stripe, which appears more yellow than brown when united with dark bars, thus giving them the common name of yellow-backed or yellow-fronted. The tail is large, and in males has a faint red pink spot at the base that is very difficult to see unless the animal is under a light. The neck varies from strong to weak, and males often have longer teeth.}
}\\
\midrule
\textnormal{\textbf{\textit{Prefix}}:   The water temperature of \textcolor{blue}{Briar Creek} near its source ranges from 35.96 to 64.94°F (2.20 to 18.30°C), }\\\midrule
\textnormal{\textbf{\textit{Human Continuation}}:  with a median of 51.44°F (10.80°C). This is the highest minimum temperature of any studied location in the watershed. Near West Branch \textcolor{blue}{Briar Creek}'s confluence with \textcolor{blue}{Briar Creek}, the latter creek has a temperature range of 32 to 67.46°F (0.00 to 19.70°C) and a median of 49.28°F (9.60°C). This is the lowest minimum temperature of any studied location in the drainage basin. Near its mouth, the temperature ranges from 32.18 to 80.96°F (0.10 to 27.20°C). The median temperature is 57.56°F (14.20°C). On West Branch \textcolor{blue}{Briar Creek}, the temperature is a minimum of 35.6°F (2.0°C), a median of 57.38°F (14.10°C), and a maximum of 71.6°F (22.0°C). Two miles downstream of East Branch \textcolor{blue}{Briar Creek}'s headwaters, the median temperature is 49.73°F (9.85°C) and the range is 32.18 to 71.42°F (0.10 to 21.90°C). 
}\\
\midrule
\textnormal{\textbf{\textit{Adaptive decoding}}:  with a seasonal mean range of 40.52°F (3.68°C). \textcolor{red}{Seasonal} variation in temperature, as well as production of sulfides, can significantly affect food source quality, quality of water, and the quality and quantity of sediment that may be trapped in the porous pipes and culverts that lead into groundwater.
Relatively constant temperatures between 40°F (4°C) and 55°F (13°C) are common in the watershed. Temperatures above these extremes may occur occasionally when the watershed is dry, but the average annual maximum temperature in \textcolor{blue}{Briar Creek} is 52.92°F (12.55°C). Average \textcolor{red}{summer} temperatures on the \textcolor{blue}{creek}'s western slopes are generally less than 60°F (15°C) \textcolor{red}{throughout most of the year}. Sustaining temperature above 60°F (15°C) is usually limited to the \textcolor{red}{summer months} and are relatively infrequent.
\textcolor{blue}{Briar Creek} drains nine watersheds with areas in the northern half of the watersheds containing more than 70\% of the plants (wheat, barley, rice, rye and bulgur), but the predominant tree species (huckleberry, aspen, birch, and pine) is oak. 
}\\
\bottomrule
\end{tabular}
\end{sc}
\end{small}
\end{center}
\vskip -0.1in
\end{table*}

\begin{table*}[t]
\caption{Case study: Text continuation with the Llama2-7B model. Entities are in blue fonts, logical relationships are in red, and coherent contents are highlighted in green.
}
\label{case_llama}
\vskip 0.15in
\begin{center}
\begin{small}
\begin{sc}
\begin{tabular}{p{16cm}}
\toprule
\textnormal{\textbf{\textit{Prefix}}:  \textcolor{blue}{Liszt} found his method through two compositional practices, which he used in his \textcolor{green!30!black}{symphonic poems}. \textcolor{red}{The first practice} was cyclic form, a}\\\midrule
\textnormal{\textbf{\textit{Human Continuation}}: procedure established by Beethoven in which certain movements are not only linked but actually reflect one another's content.  \textcolor{blue}{Liszt} took Beethoven's practice one step further, combining separate movements into a single-movement cyclic structure.  Many of \textcolor{blue}{Liszt}'s mature works follow this pattern, of which Les préludes is one of the best-known examples. \textcolor{red}{The second practice} was thematic transformation, a type of variation in which one theme is changed, not into a related or subsidiary theme but into something new, separate and independent. }\\
\midrule
\textnormal{\textbf{\textit{Adaptive decoding}}: form that repeats the thematic material several times in different \textcolor{green!30!black}{musical settings}, as with the work of Haydn and Beethoven. It also was used in works such as Wagner's \textcolor{green!30!black}{opera cycle}, Der Ring des Nibelungen and Brahms' Alto Rhapsody. \textcolor{red}{Liszt's second practice} was \textcolor{green!30!black}{program music}, which uses musical form in order to support programmatic concepts. \textcolor{blue}{Liszt} applied both compositional practices to his \textcolor{green!30!black}{symphonic poems, using cyclic form to construct musical episodes that reflect programmatic ideas. The music reflects the program throughout the piece.} }\\
\midrule
\textnormal{\textbf{\textit{Prefix}}:  \textcolor{green!30!black}{Rebellions} can occur if the \textcolor{blue}{loyalty} of a particular province falls too low, with a rebel army appearing in the province to attempt to assume control
}\\\midrule
\textnormal{\textbf{\textit{Human Continuation}}: from the owners. \textcolor{green!30!black}{Civil wars} may also take place if several generals commanding large armies have sufficiently low \textcolor{blue}{loyalty}. \textcolor{green!30!black}{In the event of a civil war, the player is given the choice to back either the current rulers or the rebels.} It had been planned to allow other factions who had established a prior claim to the throne by marriage to princesses to join in a civil war to claim the throne for themselves; however, this was never implemented. Naval warfare is carried out upon the campaign map, where ships can be built and organised into fleets. These fleets can be used to control the game's sea regions and form sea lanes, allowing trade and troop movement between provinces that have constructed a port. Fleets can engage in sea battles with foreign fleets, although unlike land battles these are resolved by the computer. Religion plays an important aspect in Medieval: \textcolor{green!30!black}{Total War}, with religious differences between the Catholic, Orthodox and Muslim factions affecting diplomacy and population \textcolor{blue}{loyalty}. 
 }\\
\midrule
\textnormal{\textbf{\textit{Adaptive decoding}}: . If the \textcolor{blue}{loyalty} falls too far, the province will secede, and become a separate, unallied nation (this may sometimes be undesired, so it is important to know how to increase \textcolor{blue}{loyalty} and to always pay attention to the needs of all provinces). \textcolor{green!30!black}{As a province is loyal}, its \textcolor{blue}{loyalty} will be decreased by any war that the province takes part in, and will be increased by having any kind of benefit in the province.
As a result of the fall of the West Roman Empire, \textcolor{green!30!black}{most of the Western Mediterranean provinces were either conquered or claimed} by the Christian Roman States, or by Germanic tribes such as the Visigoths, Ostrogoths or Vandals. The Eastern Mediterranean however is largely in the hands of Arabic Caliphates, or the Empire of Japan.
The Western Mediterranean is the primary area where player wars begin, although it is possible to war in the eastern mediterranean, the arctic, or most recently, in the Indian Ocean. The Western Mediterranean however is also the richest area in the game, \textcolor{green!30!black}{with a large amount of resources and a large number of wealthy cities which can be attacked.}
}\\
\midrule
\textnormal{\textbf{\textit{Prefix}}:  In 1947 \textcolor{blue}{Bessie} was elected to the Labour Party's National Executive Committee (NEC). She was generally identified with the left
}\\\midrule
\textnormal{\textbf{\textit{Human Continuation}}: wing of the party, and was for a time associated with a grouping known as the "Socialist Fellowship", which espoused a programme of colonial freedom, workers' control and reduced arms expenditure. She resigned from the Fellowship in 1950, along with fellow-MPs Fenner Brockway and Ellis Smith, when it condemned the United Nations intervention in the Korean War. She continued her wholehearted campaigning on behalf of the poorest in the country, pleading with parliament to "remember the queues outside the Poor Relief offices", and \textcolor{red}{castigating the "New Look" fashion of 1948 as wasteful}, "the ridiculous whim of idle people". \textcolor{blue}{Bessie}'s fiery reputation did not harm her electorally; \textcolor{red}{in the February 1950 general election}, with the Exchange constituency greatly increased by boundary changes, her majority rose to 5,344. Nationally Labour lost 76 seats, and its parliamentary majority was reduced to five. Attlee's second government was short-lived; 
 }\\
\midrule
\textnormal{\textbf{\textit{Adaptive decoding}}: wing of the Labour Party. She was also active in organisations supporting Jewish refugees from the Nazis.
With the help of another Jewish member of the Labour Party, Maurice Jacobson, \textcolor{blue}{Bessie} had her own radio programme (\textcolor{blue}{Bessie} Braddock's Book) and also became a popular speaker at meetings. \textcolor{red}{In 1947} she was approached by the National Union of General and Municipal Workers (NUGM) and asked if she would run for parliament in an election.
\textcolor{red}{In the general election of 1950} \textcolor{blue}{Bessie} Braddock was elected as Member of Parliament for Liverpool Exchange. She stood for the National Union of General and Municipal Workers, Labour Co-Operative and the Socialist Medical Association (of which she was a founder member). Her election campaign became more and more left wing.
In Parliament she was very outspoken and unpredictable, using colourful language in her speeches. A political opponent once described her as "the only man in the House of Commons".
\textcolor{red}{In 1954} she was appointed as parliamentary private secretary to Hugh Gaitskell.
}\\
\bottomrule
\end{tabular}
\end{sc}
\end{small}
\end{center}
\vskip -0.1in
\end{table*}



\end{document}